\def\eqref#1{equation~\ref{#1}}
\def\1{\bm{1}}
\DeclareMathAlphabet{\mathsfit}{\encodingdefault}{\sfdefault}{m}{sl}
\SetMathAlphabet{\mathsfit}{bold}{\encodingdefault}{\sfdefault}{bx}{n}
\newcommand{\R}{\mathbb{R}}
\DeclareMathOperator*{\argmax}{arg\,max}
\DeclareMathOperator*{\argmin}{arg\,min}
\renewcommand{\algorithmiccomment}[1]{\bgroup\hfill//~#1\egroup}
\setlist[itemize]{noitemsep, topsep=0pt, leftmargin=11pt}
\setlist[enumerate]{noitemsep, topsep=0pt, leftmargin=11pt}
\theoremstyle{plain}
\newtheorem{theorem}{Theorem}[section]
\newtheorem{lemma}[theorem]{Lemma}
\theoremstyle{definition}
\newtheorem{definition}[theorem]{Definition}
\theoremstyle{remark}
\def\1{\mathbf{1}}
\def\P{\mathbb{P}}
\def\R{\mathbb{R}}
\newcommand{\mc}[1]{\mathcal{#1}}
\icmltitlerunning{Improved Algorithm for Deep Active Learning under Imbalance via Optimal Separation}
\begin{document}
\twocolumn[
\icmltitle{Improved Algorithm for Deep Active Learning under Imbalance via Optimal Separation}



\icmlsetsymbol{equal}{*}

\begin{icmlauthorlist}
\icmlauthor{Shyam Nuggehalli}{equal,wisc}
\icmlauthor{Jifan Zhang}{equal,wisc}
\icmlauthor{Lalit Jain}{uw}
\icmlauthor{Robert Nowak}{wisc}
\end{icmlauthorlist}

\icmlaffiliation{wisc}{University of Wisconsin-Madison}
\icmlaffiliation{uw}{University of Washington, Seattle}

\icmlcorrespondingauthor{Jifan Zhang}{jifan@cs.wisc.edu}

\icmlkeywords{Deep Learning, Active Labeling, Class Imbalance}

\vskip 0.3in
]



\printAffiliationsAndNotice{\icmlEqualContribution} 

\begin{abstract}
Class imbalance severely impacts machine learning performance on minority classes in real-world applications. While various solutions exist, active learning offers a fundamental fix by strategically collecting balanced, informative labeled examples from abundant unlabeled data. We introduce DIRECT, an algorithm that identifies class separation boundaries and selects the most uncertain nearby examples for annotation. By reducing the problem to one-dimensional active learning, DIRECT leverages established theory to handle batch labeling and label noise -- another common challenge in data annotation that particularly affects active learning methods. Our work presents the first comprehensive study of active learning under both class imbalance and label noise. Extensive experiments on imbalanced datasets show DIRECT reduces annotation costs by over 60\% compared to state-of-the-art active learning methods and over 80\% versus random sampling, while maintaining robustness to label noise.
\end{abstract}

\section{Introduction}
Large-scale deep learning models are playing increasingly important roles across many industries. Human feedback and annotations have played a significant role in developing such systems. Progressively over time, we believe the role of humans in a machine learning pipeline will shift to annotating rare yet important cases. However, under data imbalance, the typical strategy of randomly choosing examples for annotation becomes especially inefficient. This is because the majority of the labeling budget would be spent on common and well-learned classes, resulting in insufficient rare class examples for training an effective model. To mitigate this issue, many recent active learning algorithms have focused on labeling more class-balanced and informative examples \citep{aggarwal2020active,kothawade2021similar,zhang2022galaxy,zhang2024algorithm,soltani2024learning}. For many large-scale annotation jobs, this challenge of data imbalance is further compounded by label noise -- a critical and common issue that results from annotator decision fatigue and perception differences. A rich body of literature on agnostic active learning \citep{balcan2006agnostic,dasgupta2007general,hanneke2014theory,katz2021improved} addresses this challenge on low-complexity model classes (e.g. linear models). However, for deep learning models, these algorithms often becomes ineffective due to the large model class complexity. In this paper, we propose a novel active learning strategy for both class imbalance and label noise. Our algorithm DIRECT sequentially and adaptively chooses informative and more class-balanced examples for annotation while being robust to noisy annotations. To the best of our knowledge, this is the first deep active learning study to address the challenging yet prevalent scenario where both imbalance and label noise coexist.
\begin{figure*}[t]
    \centering
    \begin{subfigure}[t]{.32\textwidth}
        \centering
        \includegraphics[width=\textwidth]{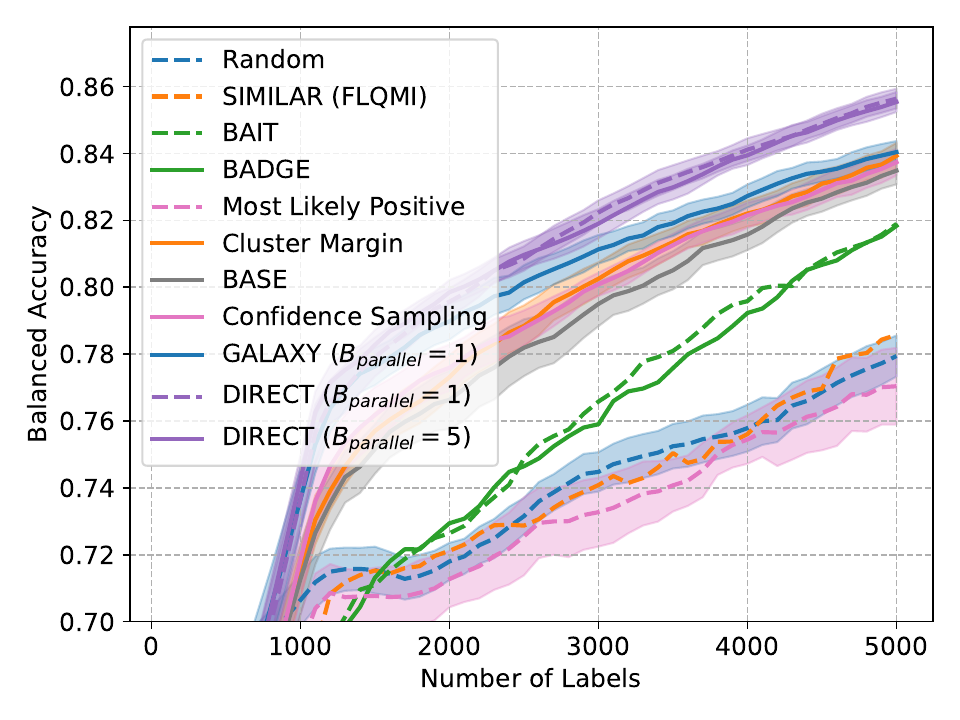}
        \caption{Imbalanced CIFAR-10, two classes, no label noise.}
    \end{subfigure}
    \begin{subfigure}[t]{.32\textwidth}
        \centering
        \includegraphics[width=\textwidth]{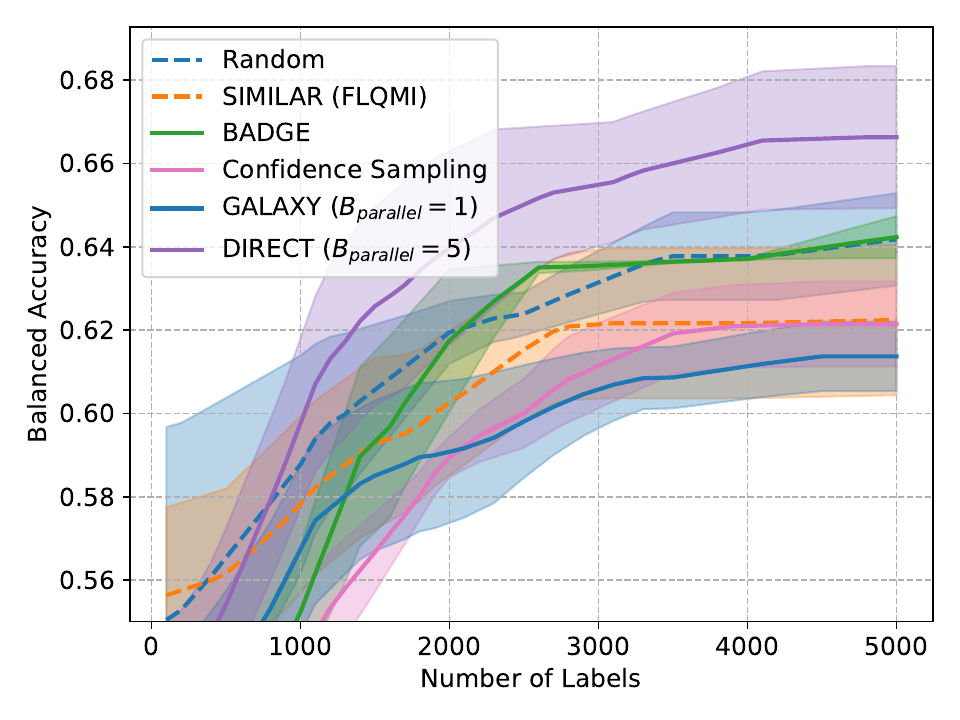}
        \caption{Imbalanced CIFAR-100, two classes, 20\% label noise.}
    \end{subfigure}
    \begin{subfigure}[t]{.32\textwidth}
        \centering
        \includegraphics[width=\textwidth]{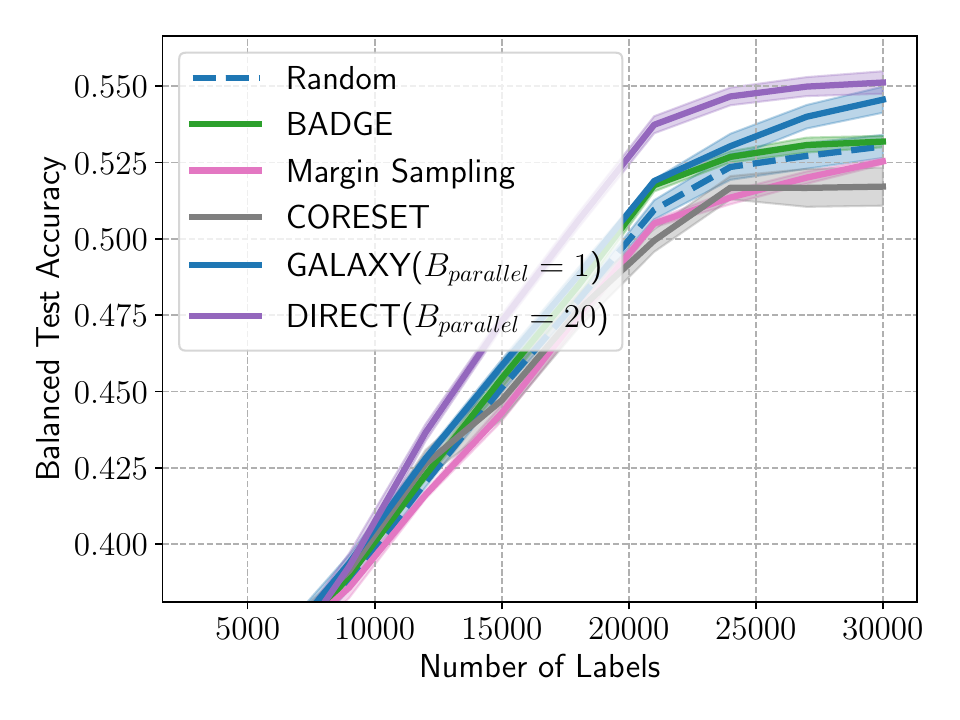}
        \caption{LabelBench FMoW (62 classes with imbalance), no label noise}
    \end{subfigure}
    \vspace{-0.5\intextsep}
    \caption{Performance of DIRECT over existing baselines for both noiseless and noisy settings. The x-axis represents the total number of labeled examples so far and the y-axis shows the neural network's balanced accuracy. Both (a) and (b) are using supervised training of ResNet-18. In (c), we finetune CLIP ViT-B32 model in combination of semi-supervised training under the LabelBench framework~\citep{zhang2024labelbench}. $B_\text{parallel}$ is the batch size indicating the number of parallel annotators. $B_\text{parallel} = 1$ indicates the synchronous annotation requirement by GALAXY. Our algorithm DIRECT takes pre-specified $B_\text{parallel}$ as input, which is determined by real world scenarios. }
    \label{fig:intro}
    \vspace{-1\intextsep}
\end{figure*}

To bridge the gap between the imbalanced deep active learning and the agnostic active learning literature, we propose a novel reduction of the imbalanced classification problem into a set of one-dimensional agnostic active learning problems. For each class, our reduction sorts unlabeled examples into an list ordered by one-vs-rest margin scores. The objective of DIRECT is to find the \emph{optimal separation threshold} which best separates the examples in the given class from the rest. By relating our problem to that of finding the best threshold classifier, we are able to employ ideas from the agnostic active learning literature to learn the separation threshold robustly under label noise. By annotating around the threshold, the annotated examples are more class-balanced and informative.

Comparing to existing active learning algorithms such as BADGE~\citep{ash2019deep}, Cluster-Margin~\citep{citovsky2021batch}, SIMILAR~\citep{kothawade2021similar}, GALAXY~\citep{zhang2022galaxy} and many others, DIRECT improves significantly in label efficiencies -- less annotations needed to reach the same accuracy. Notably, most existing methods mentioned above are proposed to handle batch labeling, while previous work by \citet{zhang2022galaxy} proposes a superior performance algorithm at the cost of only allowing one annotation at a time. Our algorithm DIRECT is able to obtain the best of both worlds -- practical scalability to large annotation jobs by batch labeling while also getting superior performance than all algorithms including GALAXY.
On imbalanced datasets, DIRECT achieves state-of-art label efficiency on both supervised fine-tuning of ResNet-18 and semi-supervised fine-tuning of large pretrained model under the LabelBench~\citep{zhang2024labelbench} framework.

To summarize our main contributions:
\begin{itemize}
    \item We propose a novel reduction that bridges the advancement in the theoretical agnostic active learning literature to imbalanced active classification for deep neural networks.
    \item Our algorithm DIRECT addresses the prevalent imbalance and label noise issues and annotates a more class-balanced and informative set of examples.
    \item Compared to state-of-art algorithm GALAXY \citep{zhang2022galaxy}, DIRECT allows parallel annotation by multiple annotators while still maintaining significant label-efficiency improvement.
    \item We conduct experiments across 12 dataset settings, four levels of label noise and for both ResNet-18 and large pretrained model (CLIP ViT-B32). DIRECT consistently outperforms existing baseline algorithms by saving more than $60\%$ annotation cost compared to the best existing algorithm, and more than $80\%$ annotation cost compared to random sampling.
\end{itemize}

\section{Related Work}
\textbf{Class-Balanced Deep Active Learning}
Active learning strategies sequentially and adaptively choose examples for annotation. Many uncertainty-based deep active learning methods extend the traditional active learning literature such as margin, least confidence and entropy sampling \citep{tong2001support, settles2009active, balcan2006agnostic, kremer2014active}. These methods have been shown to perform among the top when fine-tuning large pretrained models and combined with semi-supervised learning algorithms \citep{zhang2024labelbench}. More sophisticated methods have been proposed to optimize chosen examples' uncertainty \citep{gal2017deep,ducoffe2018adversarial,beluch2018power}, diversity \citep{sener2017active, geifman2017deep, citovsky2021batch}, or a mix of both \citep{ash2019deep, ash2021gone, wang2021deep, elenter2022lagrangian, mohamadi2022making}. However, these methods often perform poorly under prevalent and realistic scenarios such as label noises \citep{khosla2022neural} or class imbalance \cite{kothawade2021similar,zhang2022galaxy,zhang2024labelbench}.

\textbf{Deep Active Learning under Imbalance}
Data imbalance and rare instances are prevalent in almost all modern machine learning applications. Active learning techniques are effective in addressing the problem in its root by collecting a more class-balanced dataset \citep{aggarwal2020active, kothawade2021similar, emam2021active, zhang2022galaxy, coleman2022similarity, jin2022deep, cai2022active,zhang2024algorithm,xie2024deep}. To this end, \citet{kothawade2021similar} propose a submodular-based method that actively annotates examples similar to known examples of rare instances. GALAXY\citep{zhang2022galaxy} constructs one-dimensional linear graphs and applies graph-based active learning techniques in annotating a set of examples that are both class-balanced and uncertain. While GALAXY outperforms existing algorithms, due to a bisection procedure involved, it does not allow parallel annotation. In addition, bisection procedures are generally not robust against label noises, a prevalent challenge in real world annotation tasks. Our algorithm DIRECT mitigates all of the above shortcomings of GALAXY while outperforming it even with synchronous labeling and no label noise, beating GALAXY in its own game. Lastly, we distinguish our work from \citet{zhang2024algorithm}, where the paper studies the algorithm selection problem. Unlike our goal of proposing a new deep active learning algorithm, the paper proposes meta algorithms to choose the right active learning algorithm among a large number of candidate algorithms.

\textbf{Agnostic Active Learning for Label Noise}
Label noise for active learning has been primarily studied under the extensive literature on agnostic learning. We refer the interested reader to the survey~\citep{hanneke2014theory} for a thorough discussion. All of these works, beginning with the seminal works by \citet{balcan2006agnostic, dasgupta2007general}, follow a familiar paradigm of disagreement based learning. This involves maintaining a version space of promising hypotheses at each time and constructing a disagreement region of unlabeled examples. For any unlabeled example in the disagreement region, there exists two hypotheses in the version disagreeing on their predictions. An example then chosen for annotation by sampling from a informative sampling distribution computed over the disagreement region. Several approaches have been proposed for computing such sampling distributions, e.g. \citet{jain2019new,katz2020empirical,katz2021improved, huang2015efficient}. As described in Section~\ref{ssec:algorithm}, our main subroutine VReduce is equivalent to fixed-budget one dimensional threshold disagreement learning based on the ACED algorithm of \citet{katz2021improved}. We remark that these algorithms tend to be overly pessimistic in training deep neural nets, and this paper hopes to close this gap.

\textbf{Deep Active Learning under Label Noise}
Label noisy settings has rarely been studied in the deep active learning literature. Related but tangential to our work, several papers have studied to use active learning for cleaning existing noisy labels~\citep{lin2016re,younesian2021qactor}. In this line of work, they assume access to an oracle annotator that will provide clean labels when queried upon. This is fundamentally different from our work, where our annotator may provide noisy labels. Another line of more theoretical active learning research studies active learning with multiple annotators with different qualities~\citep{zhang2015active,chen2022improved}. The primary goal in these work is to identify examples a weak annotator and a strong annotator may disagree, in order to only use the strong annotator on such instances. In our work, we assume access to a single source of annotator that is noisy, which is prevalent in annotation jobs today. Recently, \citet{khosla2022neural} proposed a novel deep active learning algorithm specialized for Heteroskedastic noise, where different ``regions" of examples are subject to different levels of noise. Unlike their work, our work is agnostic to the noise distributions and conduct experiments on uniformly random corrupted labels.
To our knowledge, no deep active learning literature has studied the scenario where both imbalance and label noise present. Yet, this setting is the most prevalent in real-world annotation applications.

\section{Preliminary}
\subsection{Notations} \label{sec:notation}
We study the pool-based active learning problem, where an initial unlabeled set of $N$ examples $X = \{x_1, ..., x_N\}$ are available for annotation.  Their corresponding labels $Y = \{y_1, ..., y_N\}$ are initially unknown. Furthermore, we study the multi-class classification problem, where the space of labels $\mc{Y} := [K]$ is consisted of $K$ classes. Moreover, let $N_1, ..., N_K$ denote the number of examples in $X$ of each class. We define the imbalance ratio as $\gamma = \frac{\min_{k\in[K]} N_k}{\max_{k'\in[K]} N_{k'}}$.

A deep active learning algorithm iteratively chooses batches of examples for annotation. During the $t$-th iteration, the algorithm is given labeled and unlabeled sets of examples, $L_t$ and $U_t$ respectively, where $L_t \cup U_t = X$ and $L_t \cap U_t = \emptyset$. The algorithm then chooses $B$ examples from the unlabeled set $X^{(t)} \subseteq U_t$ and then obtains  their corresponding labels $Y^{(t)}$. The labeled and unlabeled sets are then updated, i.e., $L_{t+1} \leftarrow L_t \cup X^{(t)}$ and $U_{t+1} \leftarrow U_t \backslash X^{(t)}$. Based on new labeled set $L_{t+1}$ and its corresponding labels, a neural network $f_t : X \rightarrow [K]$ is trained to inform the choice for the next iteration. The ultimate goal of deep active learning is to obtain high predictive accuracy for the trained neural network while annotating as few examples as possible.

\subsection{Limitations of Existing Imbalanced Active Learning Algorithms} \label{sec:review}
\begin{figure}
    \begin{subfigure}[t]{.48\textwidth}
        \includegraphics[width=\textwidth]{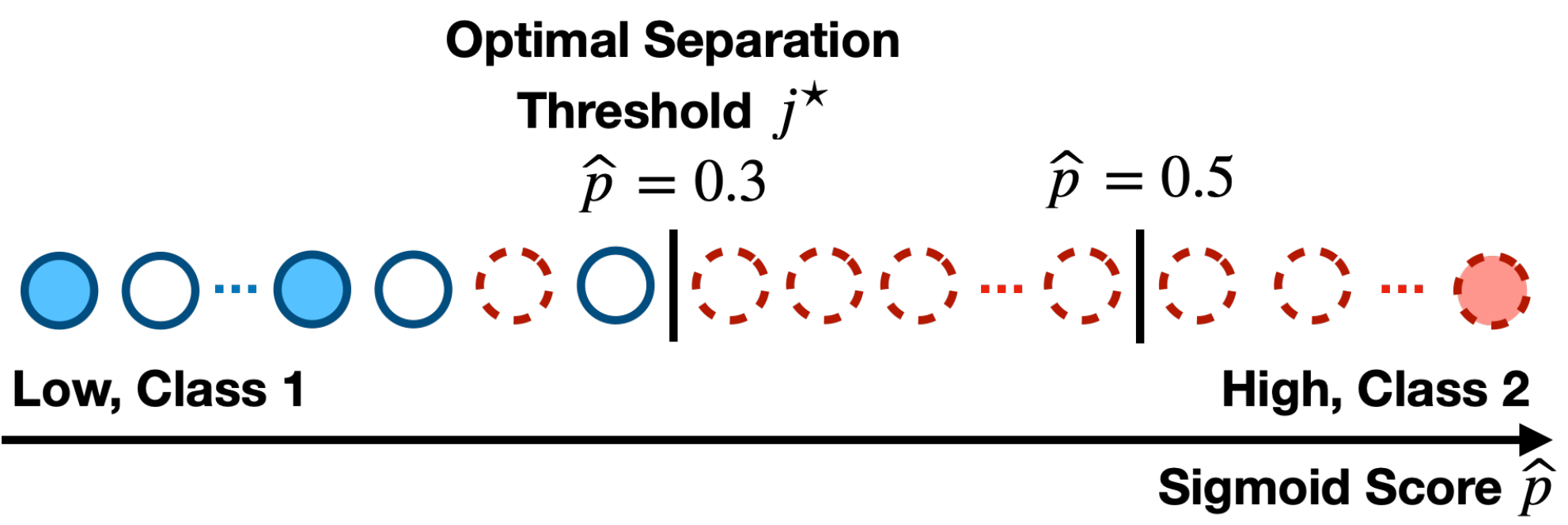}
        \caption{Uncertainty based methods that query around $\widehat{p}=.5$ could annotate examples only in the majority class.}
        \label{fig:galaxy}
    \end{subfigure}
    \hspace{.03\textwidth}
    \begin{subfigure}[t]{.48\textwidth}
        \includegraphics[width=\textwidth]{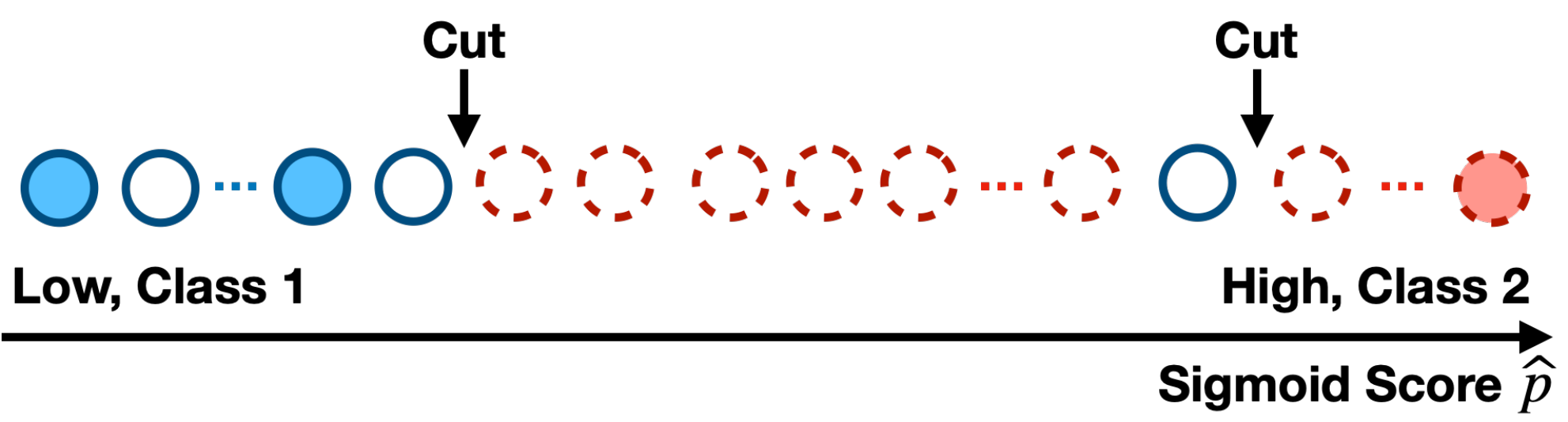}
        \caption{GALAXY spends approximately equal annotation budget around both cuts, while the cut on the right would yield examples mostly in the majority class.}
        \label{fig:galaxy2}
    \end{subfigure}
    \caption{Demonstration of existing imbalance active learning algorithms. Ordered lists of examples are ranked by the predictive sigmoid score $\widehat{p}$. The ground truth label of each example is represented by its border -- solid blue for class $1$ and dotted red for class $2$. Annotated examples are shaded.}
    \vspace{-1.5\intextsep}
\end{figure}

Below we document the several active learning algorithms and how their progressive improvement. At the end, we highlight the shortcomings of the state-of-art algorithm GALAXY~\citep{zhang2022galaxy} and motivate DIRECT's objective of adaptively finding the \emph{optimal separation threshold}. We first consider an imbalanced binary classification case, where $N_1 < N_2$ without loss of generality.

\textbf{Random Sampling.} After annotating a significant number of examples, random sampling would annotate a subset of $X$ with an imbalance ratio close to $\frac{N_1}{N_2}$. This approach suffers from annotating examples that are neither class-balanced nor informative.

\textbf{Uncertainty Sampling.} In the binary classification case, uncertainty sampling methods, such as confidence \citep{settles2009active}, margin \citep{tong2001support,balcan2006agnostic} and entropy \citep{kremer2014active} sampling, simply sort examples based on their predictive sigmoid scores $\widehat{p}$ and annotate examples closest to $.5$ as demonstrated in Figure~\ref{fig:galaxy}. As shown in our results in Figure~\ref{fig:intro} and Section~\ref{sec:experiments}, uncertainty sampling, despite improving over random sampling, significantly underperforms DIRECT and GALAXY and consistently collects less balanced annotations. This shortcoming suggests there are significantly more majority examples than minorities around the decision boundary of $\widehat{p} = .5$.

\textbf{Objective of DIRECT.} To mitigate the above issue with the decision boundary, we propose to identify the \emph{optimal separation threshold}. The threshold best separates the minority and majority classes and approximately equalizes the number of examples from both classes around its vicinity (see Section~\ref{ssec:reduction} for formal definition). We note the optimal separation threshold could be relatively distant from $\widehat{p} = .5$, as shown in Figure~\ref{fig:galaxy}. Our overall objective is to label examples that are \emph{both uncertain and class-balanced}, and can be decomposed into the following two-phased procedure: 
\fbox{
    \parbox{.95\linewidth}{
        \begin{enumerate}
            \item Identify the \emph{optimal separation threshold} $j^\star$ that best separates the minority class from the majority class, as shown in Figure~\ref{fig:galaxy}.
            \item Annotate equal number of examples next to $j^\star$ from both sides.
        \end{enumerate}
    }
}

\textbf{Limitation of GALAXY\citep{zhang2022galaxy}.} As discussed above, the neural network decision boundary $\widehat{p} = .5$ does not necessarily best separate minority and majority class examples. GALAXY draws inspiration from graph-based active learning. It relies on the fact that the best separation threshold must be a cut, namely thresholds with a minority class example to the left and a majority class example to the right (see Figure~\ref{fig:galaxy2}). The algorithm aims to find \emph{all} cuts in the sorted graph as shown in Figure~\ref{fig:galaxy2}. However, GALAXY suffers from three weaknesses:
\begin{enumerate}
    \item During active learning, the neural network is still under training and cannot perfectly separate the two classes of examples yet. Therefore, the sorted graph could have a significant number of cuts. As an example in Figure~\ref{fig:galaxy2}, when annotating around all of such cuts, the algorithm could waste a significant portion of the annotation budget around misclassified outliers, leading to a large number of majority class annotations.

    \item Under label noise, the incorrect annotation could lead to more cuts in the sorted graph, further exacerbating the above issue.

    \item GALAXY finds all cuts through a modified bisection procedure, which only allows for sequential labeling and prevents multiple annotators labeling in parallel.
\end{enumerate}
In this paper, we take a DIRECT approach by identifying only the optimal separation threshold and address all of the shortcomings above.

\section{A Robust Algorithm for Active Learning under Imbalance and Label Noise}
In this section, we formally define the optimal separation threshold and pose the problem of identifying it as an 1-dimensional reduction to the agnostic active learning problem. We then propose an algorithm inspired by the agnostic active learning literature \citep{balcan2006agnostic,dasgupta2007general,hanneke2014theory,katz2021improved}.

\subsection{An 1-D Reduction to Agnostic Active Learning}\label{ssec:reduction}
We start by considering the imbalanced binary classification setting mentioned in Section~\ref{sec:review}. When given a neural network model, we let $\widehat{p}: X \rightarrow [0, 1]$ be the predictive function mapping examples to sigmoid scores. Here, a higher sigmoid score represents a higher confidence of the example being in class $2$. We sort examples by their sigmoid predictive score similar to Section~\ref{sec:review}. Formally, we now define the optimal separation threshold as described in Section~\ref{sec:review}.

\begin{definition} 
    Let $0 = q_{(0)} \leq q_{(1)} \leq \cdots \leq q_{(N)}$, where $\{q_{(i)} \in \R\}_{i=1}^N$ is a sorted permutation of $\{\widehat{p}(x_i)\}_{i=1}^N$. Further we let $\{x_{(i)}\}_{i=1}^N$ and $\{y_{(i)}\}_{i=1}^N$ denote the sorted list's corresponding examples and labels. We define the \emph{optimal separation threshold} as $j^\star \in \{0, 1, ..., N\}$ such that
    \begin{align}\label{eqn:obj}
        j^\star = \argmax_j &\left(\lvert\{y_{(i)} = 1 : i\leq j\}\rvert - \lvert\{y_{(i)} = 2 : i\leq j\}\rvert\right) \nonumber \\
        = \argmax_j &\left( \lvert\{y_{(i)} = 2 : i > j\}\rvert - \lvert\{y_{(i)} = 1 : i > j\}\rvert\right).
    \end{align}
\end{definition}
In other words, on either side of $j^{\ast}$, it has the largest discrepancy in the number of examples between the two classes. This captures the intuition of Figure~\ref{fig:galaxy} --- our goal is to find a threshold that best separates one class from the other.
We quickly remark that ties are broken by choosing the largest $j^\star$ that attains the argmax if class $1$ is the minority class and the lowest $j^\star$ otherwise.

\begin{figure}[t]
\begin{center}
    \includegraphics[width=.4\textwidth]{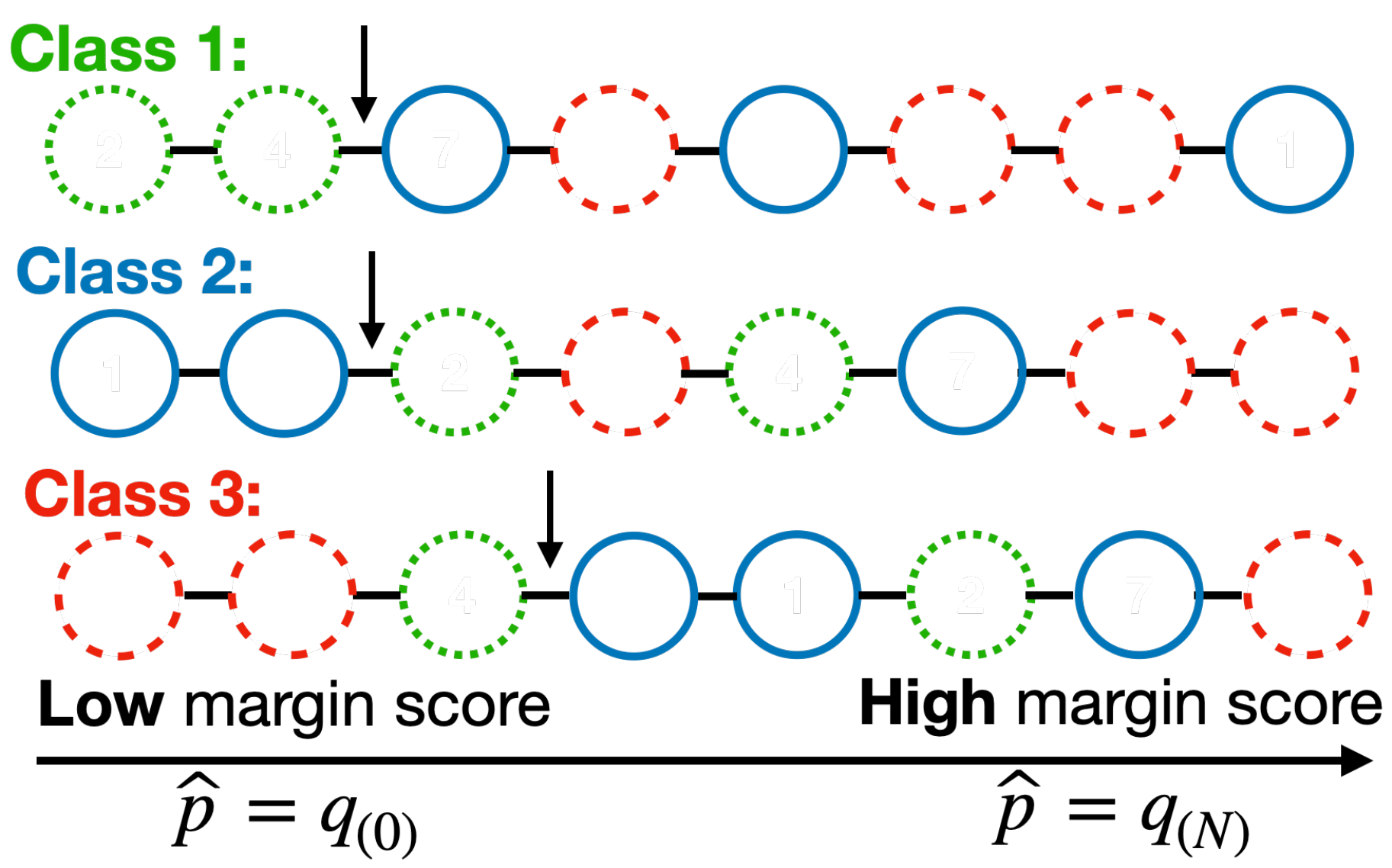}
\end{center}
\vspace{-0.5\intextsep}
\caption{Visualization of multi-class classification. For each class, we formulate the problem as a one-vs-rest binary classification problem by sorting examples based on margin scores. The black arrows indicates the optimal separation thresholds for each class.}
\vspace{-\intextsep}
\label{fig:multiclass}
\end{figure}

\noindent\textbf{1D Reduction.} We now provide a reduction of finding $j^\star$ to an 1-dimensional agnostic active learning problem. We define the hypothesis class $\mc{H} = \{h_0, h_1, ..., h_N\}$ where each hypothesis $h_j$ is defined as $h_j(q) = \begin{cases}
    1 & \text{if} \, q \leq q_{(j)} \\
    2 & \text{if} \, q > q_{(j)}
\end{cases}$.
Here, $q_{(0)} = 0$ defines the hypothesis $h_0$ that predicts class $2$ at all times. The empirical zero-one loss for each hypothesis is then defined as $\mc{L}(h_j) = \sum_{i=1}^N \1\{h_j(q_{(i)}) \neq y_{(i)}\}$. In Appendix~\ref{apx:proof}, we show that optimizing for the zero-one loss $\argmin_{0\leq j\leq N} \mc{L}(h_j)$ is equivalent to \eqref{eqn:obj}. Namely, with ties broken similar to above, $j^\star = \argmin_{0\leq j\leq N} \mc{L}(h_j)$.

\noindent\textbf{Multi-Class Classification.} To generalize the above problem formulation to multi-class classification, we follow a similar strategy to \citet{zhang2022galaxy}. As shown in Figure~\ref{fig:multiclass}, for each class $k$, we can view the problem of class-$k$~v.s.~others as a binary classification problem. The goal therefore becomes finding all $K$ optimal separation thresholds, which is equivalent with solving $K$ 1-D agnostic active learning problems. Moreover, let $\widetilde{p}: X \rightarrow \Delta^{(K-1)}$ denote the neural network prediction function, mapping examples to softmax scores. For each class $k$, we use the margin scores $\widehat{p}_i^k := [\widetilde{p}(x_i)]_k - \max_{k'}[\widetilde{p}(x_i)]_{k'}$ to sort the examples and break ties by their corresponding confidence scores $[\widetilde{p}(x_i)]_k$. Formally,
\begin{align} \label{eqn:sorting}
&\hspace{-.8em}\left(q_{(1)}^k \leq \mkern-2mu{\cdots} \leq q_{(N)}^k\mkern-1mu{:} \text{ sorted permutation of\,} \{\widehat{p}_i^k\}_{i=1}^N\right) \mkern-1mu{\land} \nonumber\\
& \left(q_{(i)}^k \mkern-1mu{=} q_{(i+1)}^k \Rightarrow [\widetilde{p}(x_i)]_k \geq [\widetilde{p}(x_{i+1})]_k\right).\hspace{-.2em}
\end{align}
Note that sorting by margin scores is equivalent to sorting by sigmoid scores for binary classification.

\subsection{One-Dimensional Agnostic Active Learning}

The key insight of our approach is to leverage the well-established theory of agnostic active learning for threshold classifiers, which provides robust guarantees even under label noise. This allows us to robustly identify the optimal separation threshold $j^\star$ from the 1-D reduction above.

\textbf{Problem Formulation and Noise Handling.} In the agnostic setting, we have a sorted sequence $q_{(1)} \leq q_{(2)} \leq \cdots \leq q_{(N)}$ with corresponding (possibly noisy) binary labels $y_{(1)}, y_{(2)}, \ldots, y_{(N)}$. The fundamental challenge is that no hypothesis $h_j \in \mc{H}$ may achieve zero empirical loss $\mc{L}(h_j)$ due to label noise or model misspecification.

To formalize how our algorithm handles label noise, we consider the underlying conditional probability function $P(y_i|x_i)$ for each data example $x_i$. Through the dimensionality reduction $x_i \rightarrow q_i$ (where $q_i$ is the real-valued sigmoid/margin score), we obtain an ordered set of 1-dimensional features $\{q_i\}_{i=1}^N$. This mapping induces a distribution $P(y_i|q_i)$ over the reduced space, which naturally encodes any label noise present in the annotations.

Our objective is to find the threshold classifier $h_{j^\star}$ from the set of 1-dimensional threshold classifiers $\{h_j\}$ that minimizes the probability of error with respect to $P(y_i|q_i)$. Crucially, our agnostic approach makes no assumptions about the form of $P(y_i|x_i)$ or the induced $P(y_i|q_i)$, allowing the algorithm to handle arbitrary noise models without requiring prior knowledge of the noise distribution.

\textbf{Version Space Reduction.} The VReduce algorithm (Algorithm~\ref{alg:vred}) implements a version space approach that maintains an interval $[I, J]$ representing plausible optimal thresholds. The key principle is that if we observe labeled examples $x_{(i)}$ with $i \leq I$ mostly belong to class $1$, and examples $x_{(j)}$ with $j \geq J$ mostly belong to class $2$, then the optimal threshold $j^\star$ with high likelihood will lie within $[I, J]$.

The algorithm proceeds iteratively by: (1) maintaining and updating the version space interval $[I, J]$ based on observed labels, (2) sampling unlabeled examples uniformly within this interval to maximize information gain, (3) shrinking the version space based on empirical loss estimates that account for the noisy observations, and (4) repeating until the labeling budget is exhausted.

\begin{algorithm}[t]
\begin{algorithmic}
\STATE\textbf{Input: } Labeled set $L$, budget $b$, class of interest $k$, parallel batch size $B_{\text{parallel}}$, sorted examples $\{x_{(i)}^k, y_{(i)}^k, q_{(i)}^k\}_{i=1}^N$ (note $y_{(i)}^k$ of unlabeled examples are hidden to the learner).

\STATE\textbf{Initialize: } Version space $[I, J]$ as the shortest interval such that: $\forall i \leq I$ with $x_{(i)} \in L$: $y_{(i)} = k$, and $\forall j \geq J$ with $x_{(j)} \in L$: $y_{(j)} \neq k$.

\STATE Number of iterations $m \leftarrow b/B_{\text{parallel}}$. Shrinking factor $c \leftarrow (J - I)^{1/m}$.

\FOR{$t = 1, ..., m$}
    \STATE Sample $B_{\text{parallel}}$ unlabeled examples uniformly from $\{x_{(I)}^k, \ldots, x_{(J)}^k\}$ and add to $L$.

    \STATE Compute empirical loss: \\$\widehat{\mc{L}}^{k}(s) = \sum_{r \leq s: x_{(r)} \in L} \1\{y_{(r)} \neq k\} + \sum_{r > s: x_{(r)} \in L} \1\{y_{(r)} = k\}$.

    \STATE Update version space:\\ $[I, J] \leftarrow \argmin_{[i, j]: j - i = (J-I)/c} \max\{\widehat{\mc{L}}^{k}(i), \widehat{\mc{L}}^{k}(j)\}$.
\ENDFOR

\textbf{Return: } Updated labeled set $L$.
\end{algorithmic}
\caption{VReduce: Version Space Reduction for Threshold Learning}
\label{alg:vred}
\end{algorithm}

\textbf{Theoretical Guarantees.} The VReduce algorithm inherits robust theoretical properties from the agnostic active learning literature, achieving near-minimax and instance-optimal sample complexity bounds \citep{dasgupta2007general,hanneke2014theory}. Unlike bisection-based methods that fail under label corruption, the disagreement-based framework \citep{balcan2006agnostic} provides natural robustness with high probability guarantees regardless of the underlying data distribution or noise model. Building on the ACED framework \citep{katz2021improved}, our algorithm extends these classical guarantees to practical batch settings through the $B_{\text{parallel}}$ parameter while preserving all theoretical properties, making it suitable for real-world annotation scenarios with multiple parallel annotators.

\setlength{\textfloatsep}{\intextsep}
\subsection{Algorithm}\label{ssec:algorithm}
We are now ready to state our algorithm DIRECT as shown in Algorithm~\ref{alg:direct}. Each round of DIRECT follows a two-phased procedure, where the first phase aims to identify the optimal separation threshold for each class using the agnostic active learning approach described above. The second phase then annotates examples closest to the estimated optimal separation thresholds for each class. We spend half each round's budget for both phases.

\begin{algorithm}[t]
\begin{algorithmic}

\STATE\textbf{Input: } Pool $X$, \#Rounds $T$, retraining batch size $B_{\text{train}}$, number of parallel annotations $B_{\text{parallel}}$.

\STATE\textbf{Initialize: } Uniformly sample $B$ elements from $X$ to form $L_0$. Let $U_0 \leftarrow X\backslash L_0$.

\FOR{$t = 1, ..., T - 1$}
    \STATE Train neural network on $L_{t-1}$ and obtain $f_{t-1}$.

    \STATE \underline{\textbf{Find optimal separation thresholds}}

    \STATE Initialize labeled set $L_t \leftarrow L_{t-1}$ and budget per class $b \leftarrow B_{\text{train}} / 2K$.

    \FOR{$k$ in $\text{RandPerm}(\{1, ..., K\})$}
        \STATE Sort margin scores $0 = q_{(0)}^k \leq q_{(1)}^k \leq \cdots \leq q_{(N)}^k$ based on \eqref{eqn:sorting}.

        \STATE Let $x_{(i)}^k, y_{(i)}^k$ denote the example and label corresponding to $q_{(i)}^k$.
    
        \STATE Identify threshold for class $k$: $L_t \leftarrow \text{VReduce}(L_t, b, k, B_{\text{parallel}}, \{x^k_{(i)}, y^k_{(i)}, q^k_{(i)}\}_{i=1}^N)$.
    \ENDFOR

    \STATE \underline{\textbf{Annotate examples around the identified threshold}}

    \STATE Compute budget per class $b \leftarrow (B_{\text{train}} - |L_t|) / K$.
    
    \FOR{$k$ in $\text{RandPerm}(\{1, ..., K\})$}
        \STATE Estimate separation threshold (break ties by choosing the index closest to $\frac{N}{2}$):\\
        $\quad\widehat{j}^k \leftarrow \argmax_j(|\{y_{(i)} = k: x_{(i)} \in L_t \,\text{and}\, i\leq j\}| - |\{y_{(i)} \neq k: x_{(i)} \in L_t \,\text{and}\, i\leq j\}|)$.

        Annotate $b$ unlabeled examples with sorted indices closest to $\widehat{j}^k$ and insert to $L_t$.
    \ENDFOR
\ENDFOR
\STATE \textbf{Return: } Train final classifier $f_T$ based on $L_T$.
\end{algorithmic}
\caption{DIRECT: DImension REduction for aCTive Learning under Imbalance and Label Noise}
\label{alg:direct}
\end{algorithm}

\begin{figure*}[t]
    \centering
    \begin{subfigure}[t]{.32\textwidth}
        \centering
        \includegraphics[width=\textwidth]{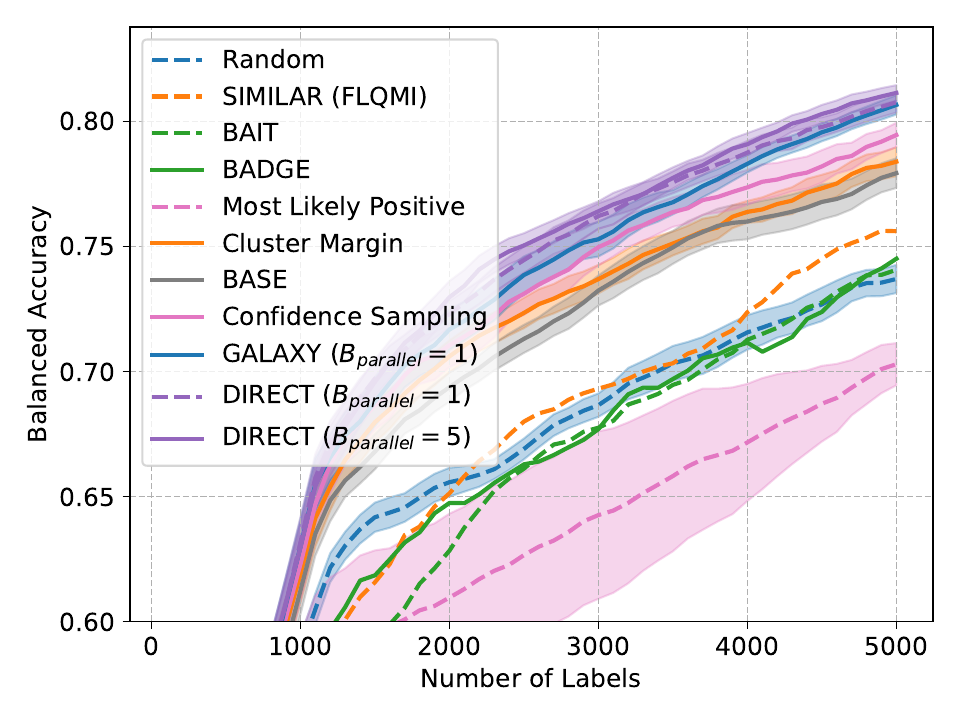}
        \caption{Imbalanced CIFAR-10, three classes}
    \end{subfigure}
    \begin{subfigure}[t]{.32\textwidth}
        \centering
        \includegraphics[width=\textwidth]{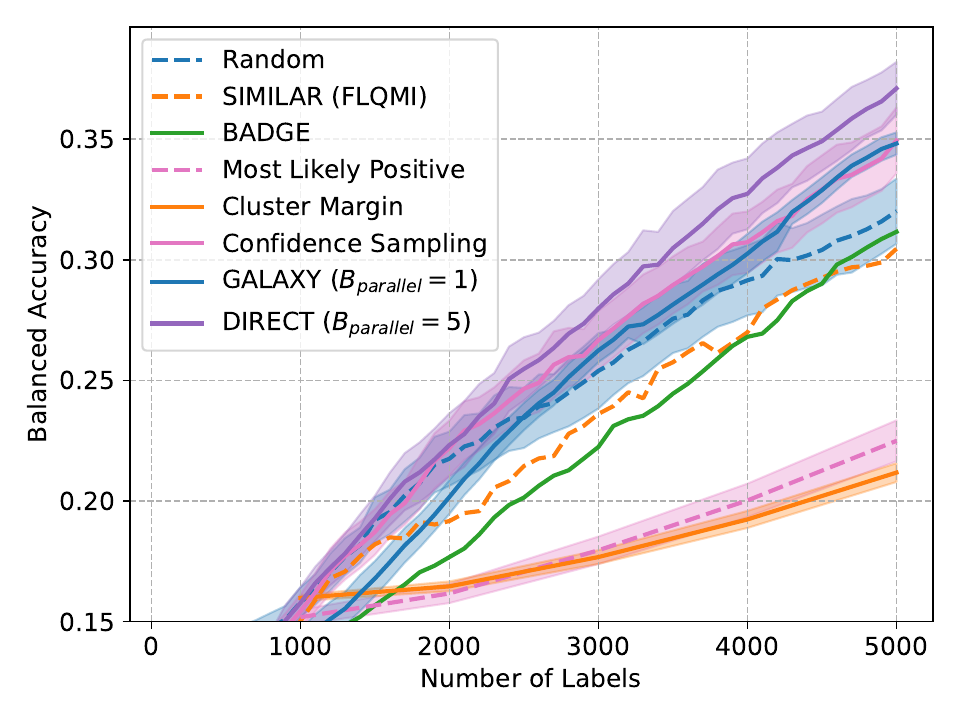}
        \caption{Imbalanced CIFAR-100, ten classes}
    \end{subfigure}
    \begin{subfigure}[t]{.32\textwidth}
        \centering
        \includegraphics[width=\textwidth]{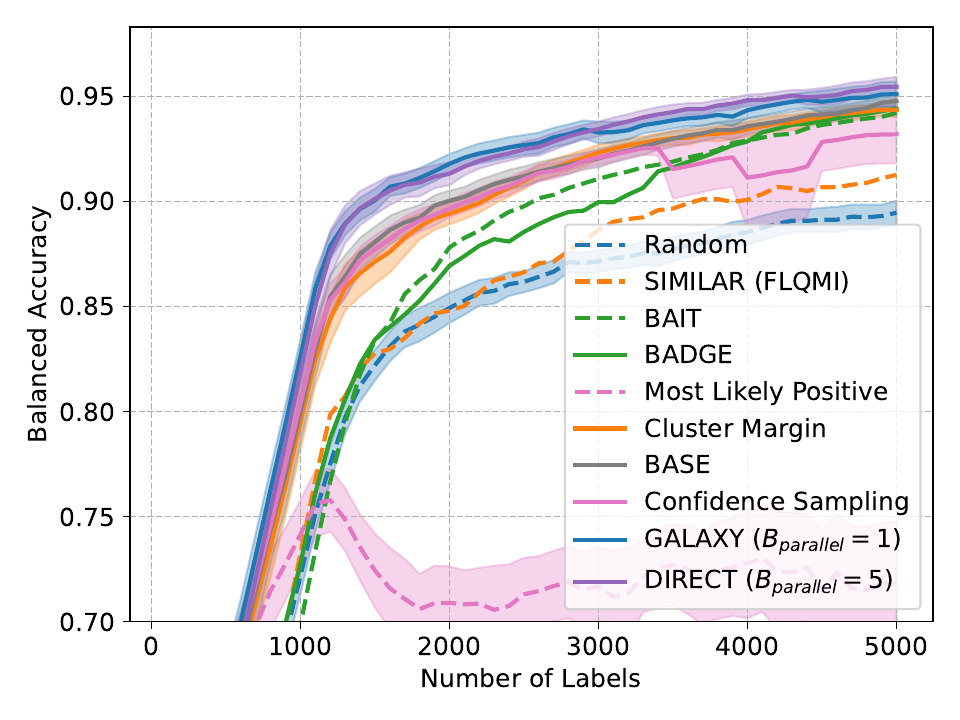}
        \caption{Imbalanced SVHN, two classes}
    \end{subfigure}
    \begin{subfigure}[t]{.32\textwidth}
        \centering
        \includegraphics[width=\textwidth]{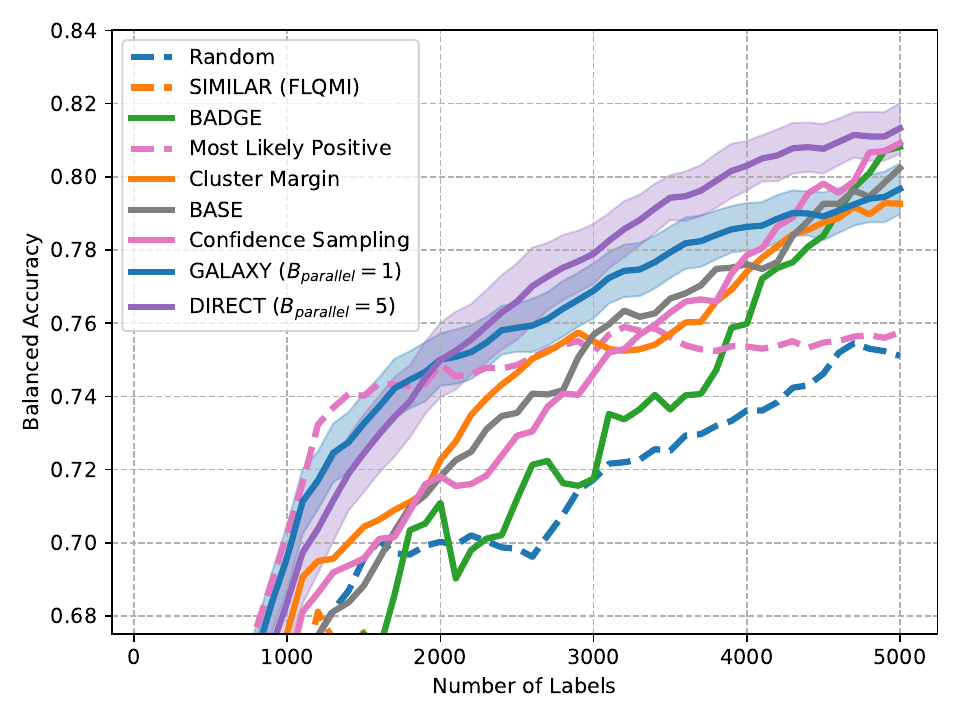}
        \caption{PathMNIST, two classes}
    \end{subfigure}
    \begin{subfigure}[t]{.32\textwidth}
        \centering
        \includegraphics[width=\textwidth]{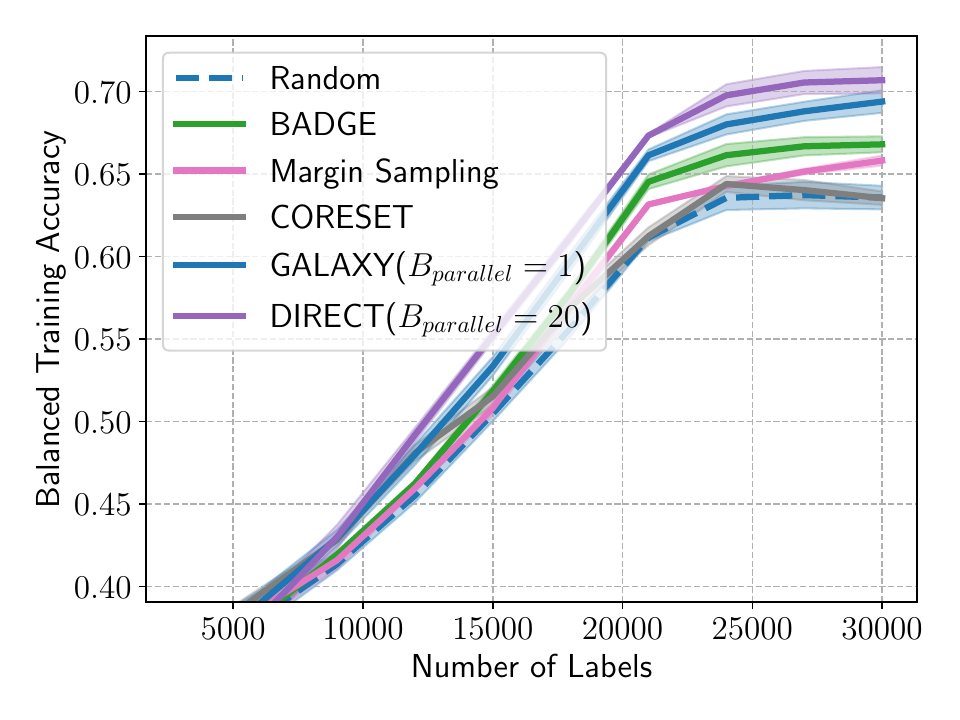}
        \caption{FMoW, Balanced Pool Accuracy}
    \end{subfigure}
    \begin{subfigure}[t]{.32\textwidth}
        \centering
        \includegraphics[width=\textwidth]{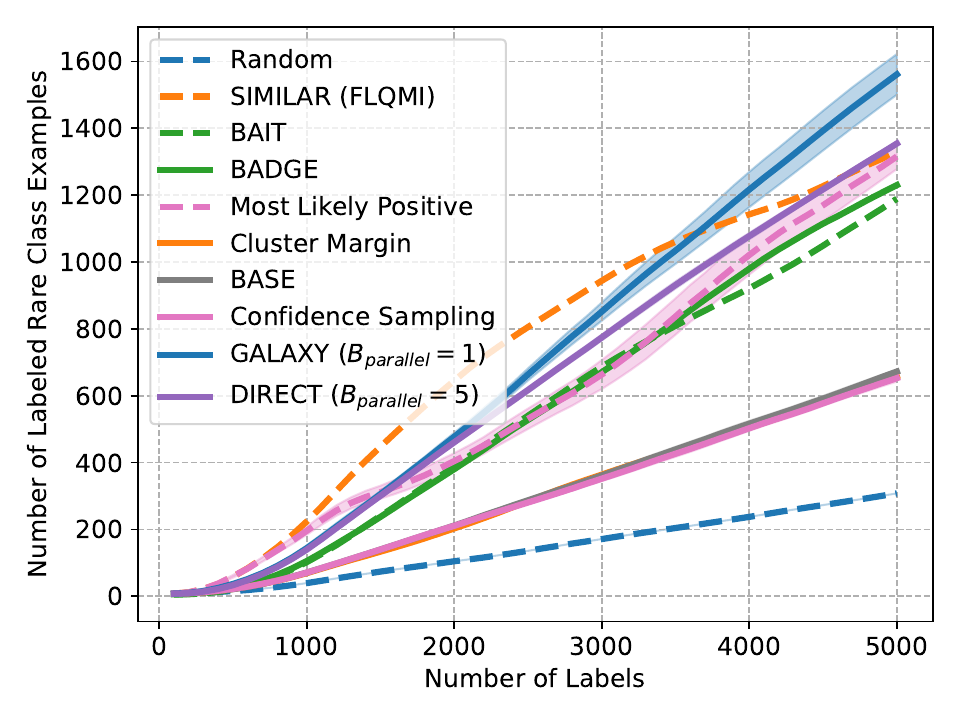}
        \caption{Imbalanced SVHN, two classes, \# of minority labeled examples}
    \end{subfigure}
    \caption{Performance comparison of DIRECT against other baselines algorithms in the noiseless but imbalanced setting. (a)-(d) are balanced accuracy on ResNet-18 experiments while (e) shows experiment under the LabelBench framework. $B_\text{parallel}$ indicates the number of parallel annotations as mentioned in Section~\ref{ssec:algorithm}. $B_\text{parallel}=1$ is equivalent with sychornous labeling. Results are averaged over four trials and the shaded areas represent standard errors around the mean.}
    \label{fig:noiseless_results}
    \vspace{-\intextsep}
\end{figure*}

During the first phase, to identify the optimal separation threshold for all classes, we loop over each class $k$ and run the agnostic active learning procedure VReduce for the corresponding 1-D class-k~v.s.~rest reduction. The second phase of DIRECT simply annotates examples closest to each optimal separation threshold, aiming to annotate a class-balanced and uncertain examples.

To address batch labeling, we let $B_{\text{train}}$ denote the number of examples the algorithm collects before the neural network is retrained. In practice, this number is usually determined by the constraints of computational training cost. On the other hand, we let $B_{\text{parallel}}$ denote the number of examples annotated in parallel. We note that, in practice, the number of examples collected before retraining is usually far greater than the number of annotators annotating in parallel, i.e., $B_{\text{parallel}} \ll B_{\text{train}}$. Lastly, as will be discussed in Section~\ref{sec:future}, our algorithm can also be modified for asynchronous labeling.

\textbf{Theoretical Comparison with GALAXY.}
As mentioned in Section~\ref{sec:review}, GALAXY's graph-based approach aims to identify all \emph{cuts} and sample examples around all of them equally. On the other hand, DIRECT aims to identify only the separation threshold and sample around it, which is superior as we have argued before and shown in our results. We now present a more theoretical comparison. As we show in Appendix~\ref{apx:proof}, the graph-based approach in GALAXY will identify and annotate around at least one more cut in addition to the optimal separation threshold, with probability at least $1 - \exp(-b\log(\frac{1}{1-\eta})/ 2)$. Here, $b$ is the budget of a single round of annotation and $\eta$ is the label noise ratio (see Appendix~\ref{apx:proof} for more details). This implies, when the budget $b$ is large, GALAXY will likely annotate around unnecessary cuts. This is in contrast with the agnostic active learning approach we take in DIRECT, where as shown by \citet{katz2021improved}, the probability of misidentifying the optimal separation threshold decays exponentially w.r.t. budget $b$. In other words, with a large budget $b$, with high likelihood, DIRECT will focus its annotation around the optimal separation threshold. Lastly, time complexity analysis in Appendix~\ref{apx:complexity} shows DIRECT's superior speed compared to BADGE and GALAXY.

\section{Experiments} \label{sec:experiments}
We conduct experiments under two primary setups:
\begin{enumerate}
    \item Supervised fine-tuning of ResNet-18 on imbalanced datasets similar to~\citet{zhang2022galaxy}.
    \item Fine-tuning large pretrained model (CLIP ViT-B32) with semi-supervised training strategies under the LabelBench framework~\citep{zhang2024labelbench}.
\end{enumerate}
For both evaluation setups, we first evaluate the performance of DIRECT under the noiseless setting in Section~\ref{sec:noiseless}, showing its superior label-efficiency and ability to accommodate batch labeling. In Section~\ref{sec:labelnoise}, we evaluate deep active learning algorithms under a novel setting with both class imbalance and noisy labels. Under this setting, we also include an ablation study of the performance of DIRECT on various levels of label noises. While we highlight many results in this section, see Appendix~\ref{apx:all_results} for complete results.  Our implementation is publicly available at: \url{https://github.com/EfficientTraining/LabelBench/blob/main/LabelBench/strategy/strategy_impl/direct.py} 

\textbf{Experiment Setups.} Our experiments utilize $14$ imbalanced datasets derived from popular computer vision datasets. For the ResNet experiments, we utilize imbalanced and/or long-tail versions of CIFAR-10, CIFAR-100~\citep{krizhevsky2009learning}, SVHN~\citep{netzer2011reading} and PathMNIST~\citep{yang2021medmnist} datasets. For the LabelBench experiments, we utilize the FMoW~\citep{christie2018functional}, iWildcam~\citep{beery2021iwildcam}, iNaturalist~\citep{van2018inaturalist} and ImageNet-LT~\citep{deng2009imagenet} datasets. We refer the readers to Appendix~\ref{sec:setup} for more details on our experiment setups.

\begin{figure*}[t]
    \centering
    \begin{subfigure}[t]{.32\textwidth}
        \centering
        \includegraphics[width=\textwidth]{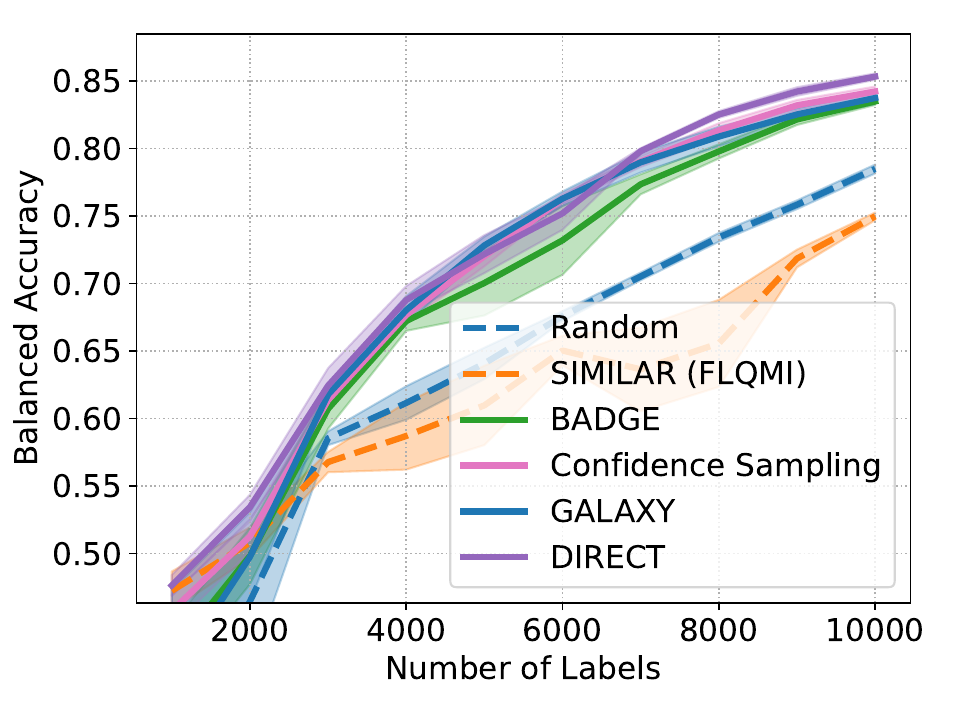}
        \caption{CIFAR-10LT, 10 classes, 15\% label noise}
    \end{subfigure}
    \begin{subfigure}[t]{.32\textwidth}
        \centering
        \includegraphics[width=\textwidth]{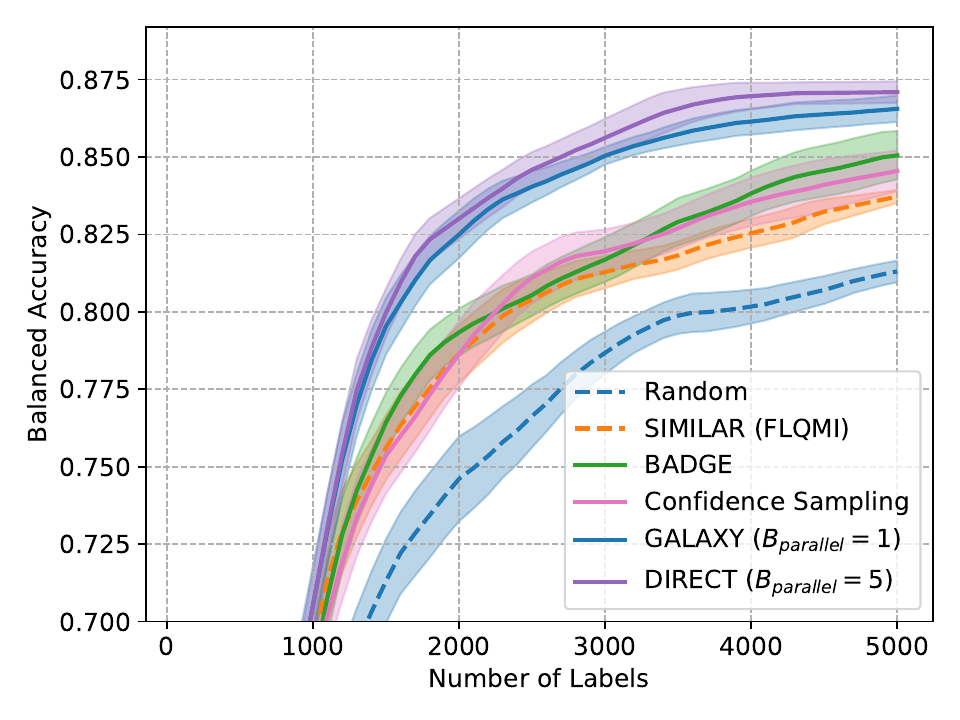}
        \caption{Imbalanced SVHN, two classes, 10\% label noise}
    \end{subfigure}
    \begin{subfigure}[t]{.32\textwidth}
        \centering
        \includegraphics[width=\textwidth]{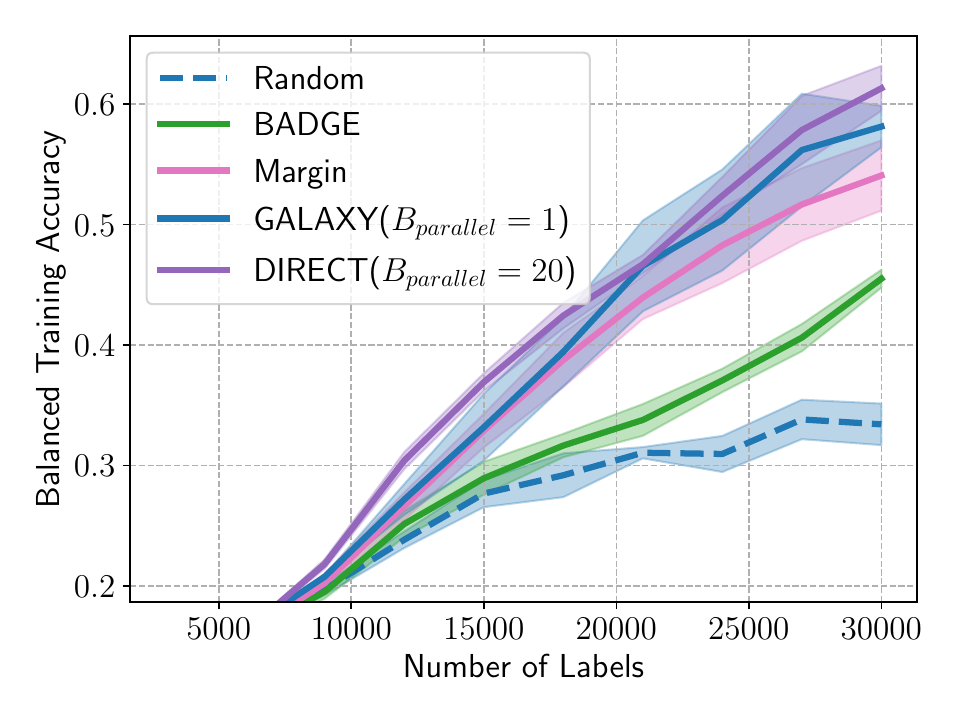}
        \caption{iWildcam Balanced Pool Accuracy, $10\%$ label noise}
    \end{subfigure}
    \caption{Performance of DIRECT against baseline algorithms under label noise.  (a)-(b) are balanced accuracy on ResNet-18 experiments while (c) shows results under the LabelBench framework. Results are averaged over four trials  and the shaded areas represent standard errors around the mean.}
    \label{fig:noisy_results}
    \vspace{-\intextsep}
\end{figure*}
\begin{figure*}[t]
    \centering
    \begin{subfigure}[t]{.32\textwidth}
        \centering
        \includegraphics[width=\textwidth]{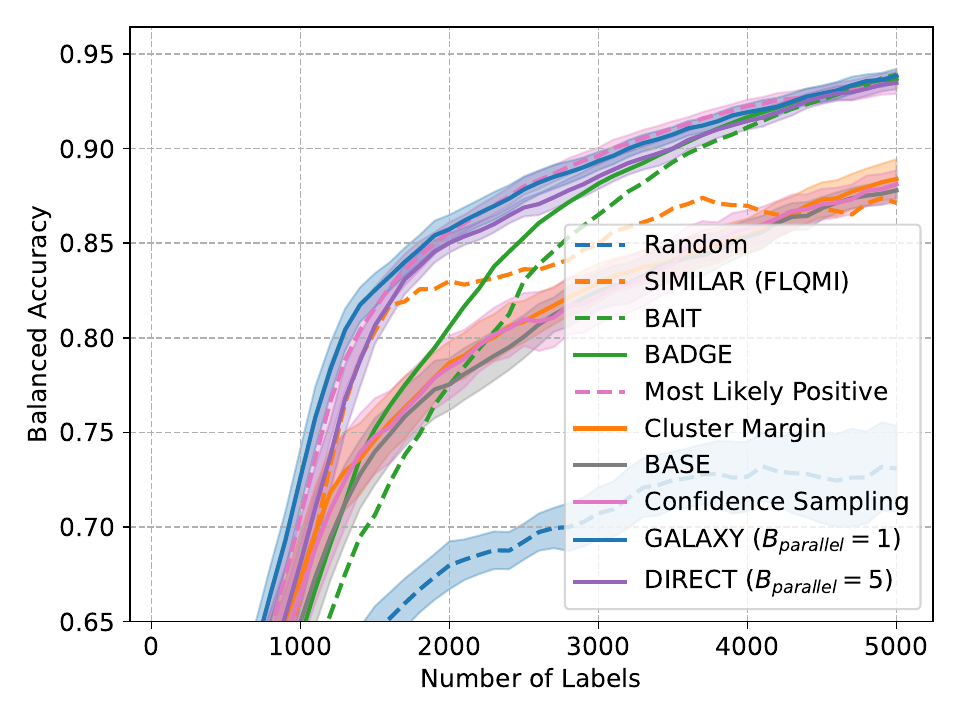}
        \caption{Imbalanced CIFAR-100, two classes, no label noise}
    \end{subfigure}
    \begin{subfigure}[t]{.32\textwidth}
        \centering
        \includegraphics[width=\textwidth]{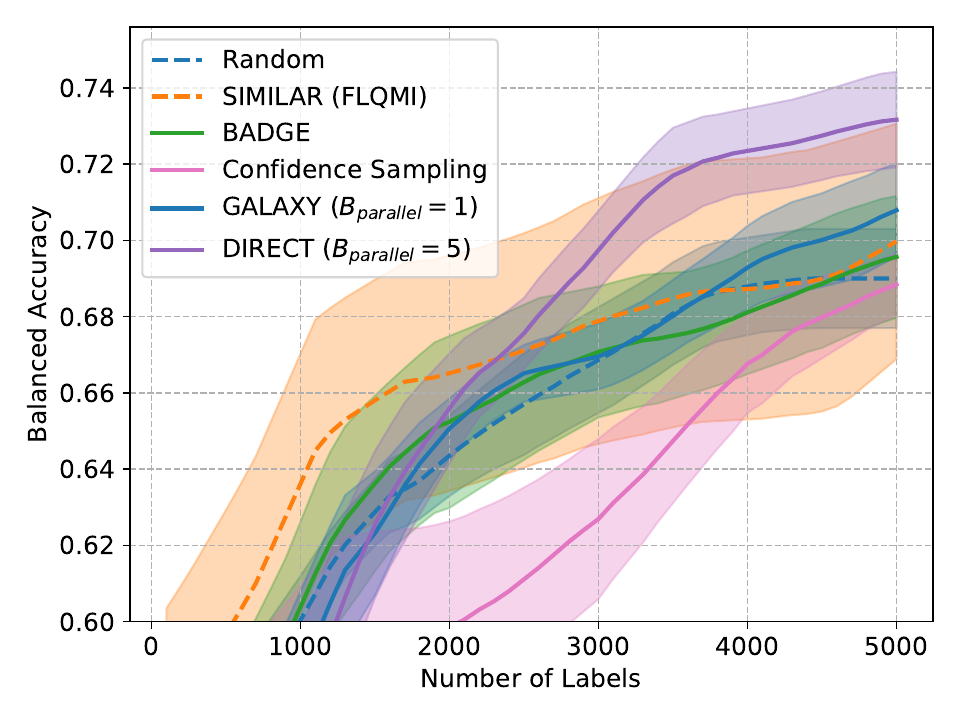}
        \caption{Imbalanced CIFAR-100, two classes, 10\% label noise}
    \end{subfigure}
    \begin{subfigure}[t]{.32\textwidth}
        \centering
        \includegraphics[width=\textwidth]{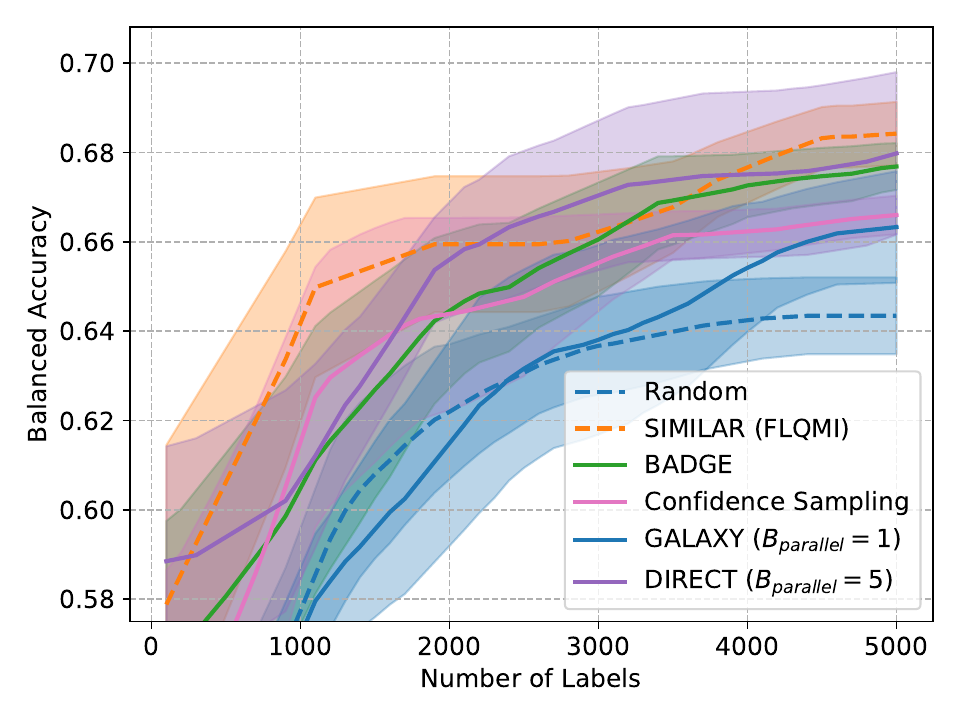}
        \caption{Imbalanced CIFAR-100, two classes, 15\% label noise}
    \end{subfigure}
    \caption{Performance of DIRECT against baseline algorithms under different levels of label noise. Results are averaged over four trials and the shaded areas represent standard errors around the mean.}
    \label{fig:noise_level}
\end{figure*}

\subsection{Experiments under Imbalance, without Label Noise} \label{sec:noiseless}
For the noiseless experiment on ResNet-18, we compare against nine baselines: GALAXY~\citep{zhang2022galaxy}, SIMILAR~\citep{kothawade2021similar}, BADGE~\citep{ash2019deep}, BASE~\citep{emam2021active}, BAIT~\citep{ash2021gone}, Cluster Margin~\citep{citovsky2021batch}, Confidence Sampling~\citep{settles2009active}, Most Likely Positive~\citep{jiang2018efficient,warmuth2001active,warmuth2003active} and Random Sampling. We briefly distinguish the algorithms into two categories. In particular, both SIMILAR and Most Likely Positive annotate examples that are similar to existing labeled minority examples, thus can significantly annotate a large quantity of minority examples. The rest of the algorithms primarily optimizes for different notions of informativeness such as diversity and uncertainty.

\begin{figure*}[ht!]
    \centering
    \includegraphics[width=\linewidth,trim={0 0 0 0.85cm},clip]{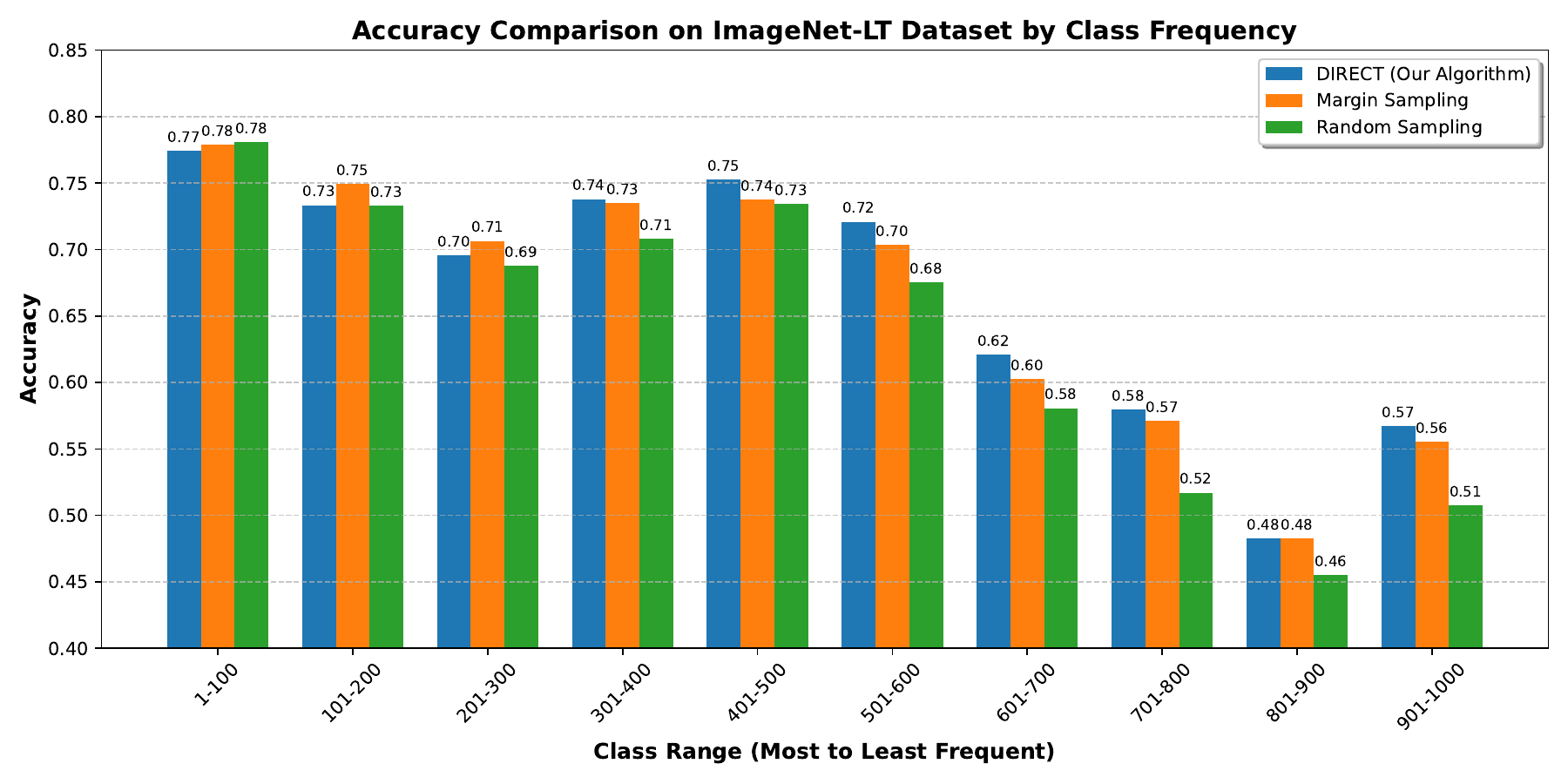}
    \caption{Performance of different active learning algorithm across different classes for the ImageNet-LT dataset.}
    \label{fig:perclass}
    \vspace{-\intextsep}
\end{figure*}

For the LabelBench experiments, due to the large dataset and model embedding sizes, we choose a subset of the algorithms that are computationally efficient and among top performers in the ResNet-18 results, including BADGE, Margin Sampling, CORESET and GALAXY. As highlighted in Figures~\ref{fig:intro}(a) and~\ref{fig:noiseless_results}(a)-(d), DIRECT consistently and significantly outperforms existing algorithms on the ResNet-18 experiments. In Figures~\ref{fig:intro}(c) and~\ref{fig:noiseless_results}(e), we demonstrate the increased label-efficiency is also consistently shown in the LabelBench experiments. Compared to random sampling, DIRECT can save more than 80\% of the annotation cost on imbalanced SVHN experiment of Figure~\ref{fig:noiseless_results}(c). In terms of class-balancedness, we consistently observe that both Most Likely Positive and SIMILAR annotating greater number of minority class examples, but significantly underperforms in terms of balanced accuracy (an example showin in Figure~\ref{fig:noiseless_results}(f)). While \citet{zhang2022galaxy} has already observed this phenomenon, we can further see that DIRECT collects slightly less minority class examples than GALAXY, but outperforms in terms of balanced accuracy. While it is crucial to optimize class-balancedness for better model performance, we see that both extremes of annotating too few and too many minority examples could lead to worse generalization performances. When too few examples are from minority class, the performance of the minority classes could be significantly hindered. When optimized to annotate as many examples from minority class as possible, the algorithm has to tradeoff annotating informative examples to examples it is more certain to be in the minority class. Together, this suggests an intricate balance between the two objectives, generalization performance and class-balancedness.

We would also like to highlight the ability to handle batch labeling. Across our experiments, we see DIRECT outperforms with different amounts of parallel annotation ($B_\text{parallel} = 1, 5$ and $20$), indicating its general effectiveness. This is in comparison to the synchoronous nature of GALAXY, where it is always using $B_\text{parallel} = 1$. On Figures~\ref{fig:intro}(a) and~\ref{fig:noisy_results}(a), we see that DIRECT outperforms GALAXY with synchronous labeling. Furthermore, in these experiments we also see using $B_\text{parallel} = 5$ only affects algorithm performances minimally for DIRECT.

\subsection{Experiments under Imbalance and Label Noise} \label{sec:labelnoise}
We conduct novel sets of experiments under both class imbalance and label noise. Here, for both ResNet-18 and LabelBench experiments, we evaluate against all of the algorithms that performed well under the imbalance but noiseless setting above. For all of our experiments, we introduce a fixed percentage of label noise, where the given fraction of the examples' labels are corrupted to a different class uniformly at random. For most of our experiments with $10\%$ label noise shown in Figure~\ref{fig:noisy_results}, we observe again that DIRECT consistently improves over all baselines including GALAXY. The results are consistent on ResNet-18 and LabelBench setups, and with different $B_\text{parallel}$ values, showing DIRECT's robustness under label noise.

\textbf{Different Levels of Label Noise} As shown in Figures~\ref{fig:noise_level}(a)-(c) and~\ref{fig:intro}(b), we observe the results on imbalanced CIFAR-100 with two classes across numerous levels of label noise, with $0\%$, $10\%$, $15\%$ and $20\%$ respectively. In fact, the noiseless experiment in Figure~\ref{fig:noise_level}(a) is the only setting DIRECT slightly underperforms GALAXY in terms of generalization accuracy. However, we see DIRECT becomes more label-efficient under label noise. It is also worth noting that with high label noise of $20\%$, we observe in Figure~\ref{fig:intro}(b) that existing algorithms underperform random sampling. In contrast, DIRECT significantly outperforms random sampling, saving more than $60\%$ of the annotation cost.

\subsection{Qualitative Analysis}
The ImageNet-LT dataset is constructed with class frequencies that gradually decrease according to class index. In Figure~\ref{fig:perclass}, we organize classes into bins ordered from most frequent to least frequent. The results show that DIRECT significantly outperforms baseline algorithms on less frequent classes (indices 301-1000), which accounts for DIRECT's superior overall performance despite modest accuracy reductions on more frequent classes. This finding aligns with our balancedness analysis, where DIRECT demonstrates improved labeling of samples from rare classes.

\section{Conclusion and Future Work} \label{sec:future}
In this paper, we conducted the first study of deep active learning under both class imbalance and label noise. We proposed an algorithm DIRECT that significantly and consistently outperforms existing literature. In this work, we also addressed the batch sampling problem of \citet{zhang2022galaxy}, by annotating multiple examples in parallel. Studying asynchronous labeling could be a natural extension of our work. A potential solution is to utilize an asynchronous variant of one-dimensional active learning algorithm. In addition, one can further batch the labeling process across different classes to further accommodate an even larger number of parallel annotators.

\clearpage
\section*{Impact Statement} \label{apx:impact}
In the rapidly evolving landscape of machine learning, the efficacy of active learning in addressing data imbalance and label noise is a significant stride towards more robust and equitable AI systems. This research explores how active learning can effectively mitigate the challenges posed by imbalanced datasets and erroneous labels, prevalent in real-world scenarios.

The positive impacts of this research are multifaceted. It enhances the accessibility and utility of machine learning in domains where data imbalance is a common challenge, such as healthcare, finance, and social media analytics. By improving class-balancedness in annotated sets, models trained on these datasets are less biased and more representative of real-world distributions, leading to fairer and more accurate outcomes. Additionally, this research contributes to reducing the time and cost associated with data annotation, which is particularly beneficial in fields where expert annotation is expensive or scarce.

However, if not carefully implemented, active learning strategies could inadvertently introduce new biases or amplify existing ones, particularly in scenarios where the initial data is severely imbalanced or contains deeply ingrained biases. Furthermore, the advanced nature of these techniques may widen the gap between organizations with access to state-of-the-art technology and those without, potentially exacerbating existing inequalities in technology deployment.

\section*{Acknowledgements}
This work has been supported in part by NSF Award 2112471.

\bibliography{sample}

\begin{thebibliography}{52}
\providecommand{\natexlab}[1]{#1}
\providecommand{\url}[1]{\texttt{#1}}
\expandafter\ifx\csname urlstyle\endcsname\relax
  \providecommand{\doi}[1]{doi: #1}\else
  \providecommand{\doi}{doi: \begingroup \urlstyle{rm}\Url}\fi

\bibitem[Aggarwal et~al.(2020)Aggarwal, Popescu, and Hudelot]{aggarwal2020active}
Aggarwal, U., Popescu, A., and Hudelot, C.
\newblock Active learning for imbalanced datasets.
\newblock In \emph{Proceedings of the IEEE/CVF Winter Conference on Applications of Computer Vision}, pp.\  1428--1437, 2020.

\bibitem[Ash et~al.(2021)Ash, Goel, Krishnamurthy, and Kakade]{ash2021gone}
Ash, J., Goel, S., Krishnamurthy, A., and Kakade, S.
\newblock Gone fishing: Neural active learning with fisher embeddings.
\newblock \emph{Advances in Neural Information Processing Systems}, 34:\penalty0 8927--8939, 2021.

\bibitem[Ash et~al.(2019)Ash, Zhang, Krishnamurthy, Langford, and Agarwal]{ash2019deep}
Ash, J.~T., Zhang, C., Krishnamurthy, A., Langford, J., and Agarwal, A.
\newblock Deep batch active learning by diverse, uncertain gradient lower bounds.
\newblock \emph{arXiv preprint arXiv:1906.03671}, 2019.

\bibitem[Balcan et~al.(2006)Balcan, Beygelzimer, and Langford]{balcan2006agnostic}
Balcan, M.-F., Beygelzimer, A., and Langford, J.
\newblock Agnostic active learning.
\newblock In \emph{Proceedings of the 23rd international conference on Machine learning}, pp.\  65--72, 2006.

\bibitem[Beery et~al.(2021)Beery, Agarwal, Cole, and Birodkar]{beery2021iwildcam}
Beery, S., Agarwal, A., Cole, E., and Birodkar, V.
\newblock The iwildcam 2021 competition dataset.
\newblock \emph{arXiv preprint arXiv:2105.03494}, 2021.

\bibitem[Beluch et~al.(2018)Beluch, Genewein, N{\"u}rnberger, and K{\"o}hler]{beluch2018power}
Beluch, W.~H., Genewein, T., N{\"u}rnberger, A., and K{\"o}hler, J.~M.
\newblock The power of ensembles for active learning in image classification.
\newblock In \emph{Proceedings of the IEEE conference on computer vision and pattern recognition}, pp.\  9368--9377, 2018.

\bibitem[Cai(2022)]{cai2022active}
Cai, X.
\newblock Active learning for imbalanced data: The difficulty and proportions of class matter.
\newblock \emph{Wireless Communications and Mobile Computing}, 2022, 2022.

\bibitem[Chen et~al.(2022)Chen, Sankararaman, Lazaric, Pirotta, Karamshuk, Wang, Mandyam, Wang, and Fang]{chen2022improved}
Chen, Y., Sankararaman, K., Lazaric, A., Pirotta, M., Karamshuk, D., Wang, Q., Mandyam, K., Wang, S., and Fang, H.
\newblock Improved adaptive algorithm for scalable active learning with weak labeler.
\newblock \emph{arXiv preprint arXiv:2211.02233}, 2022.

\bibitem[Christie et~al.(2018)Christie, Fendley, Wilson, and Mukherjee]{christie2018functional}
Christie, G., Fendley, N., Wilson, J., and Mukherjee, R.
\newblock Functional map of the world.
\newblock In \emph{Proceedings of the IEEE Conference on Computer Vision and Pattern Recognition}, pp.\  6172--6180, 2018.

\bibitem[Citovsky et~al.(2021)Citovsky, DeSalvo, Gentile, Karydas, Rajagopalan, Rostamizadeh, and Kumar]{citovsky2021batch}
Citovsky, G., DeSalvo, G., Gentile, C., Karydas, L., Rajagopalan, A., Rostamizadeh, A., and Kumar, S.
\newblock Batch active learning at scale.
\newblock \emph{Advances in Neural Information Processing Systems}, 34:\penalty0 11933--11944, 2021.

\bibitem[Coleman et~al.(2022)Coleman, Chou, Katz-Samuels, Culatana, Bailis, Berg, Nowak, Sumbaly, Zaharia, and Yalniz]{coleman2022similarity}
Coleman, C., Chou, E., Katz-Samuels, J., Culatana, S., Bailis, P., Berg, A.~C., Nowak, R., Sumbaly, R., Zaharia, M., and Yalniz, I.~Z.
\newblock Similarity search for efficient active learning and search of rare concepts.
\newblock In \emph{Proceedings of the AAAI Conference on Artificial Intelligence}, volume~36, pp.\  6402--6410, 2022.

\bibitem[Dasgupta et~al.(2007)Dasgupta, Hsu, and Monteleoni]{dasgupta2007general}
Dasgupta, S., Hsu, D.~J., and Monteleoni, C.
\newblock A general agnostic active learning algorithm.
\newblock \emph{Advances in neural information processing systems}, 20, 2007.

\bibitem[Deng et~al.(2009)Deng, Dong, Socher, Li, Li, and Fei-Fei]{deng2009imagenet}
Deng, J., Dong, W., Socher, R., Li, L.-J., Li, K., and Fei-Fei, L.
\newblock Imagenet: A large-scale hierarchical image database.
\newblock In \emph{2009 IEEE conference on computer vision and pattern recognition}, pp.\  248--255. Ieee, 2009.

\bibitem[Ducoffe \& Precioso(2018)Ducoffe and Precioso]{ducoffe2018adversarial}
Ducoffe, M. and Precioso, F.
\newblock Adversarial active learning for deep networks: a margin based approach.
\newblock \emph{arXiv preprint arXiv:1802.09841}, 2018.

\bibitem[Elenter et~al.(2022)Elenter, NaderiAlizadeh, and Ribeiro]{elenter2022lagrangian}
Elenter, J., NaderiAlizadeh, N., and Ribeiro, A.
\newblock A lagrangian duality approach to active learning.
\newblock \emph{arXiv preprint arXiv:2202.04108}, 2022.

\bibitem[Emam et~al.(2021)Emam, Chu, Chiang, Czaja, Leapman, Goldblum, and Goldstein]{emam2021active}
Emam, Z. A.~S., Chu, H.-M., Chiang, P.-Y., Czaja, W., Leapman, R., Goldblum, M., and Goldstein, T.
\newblock Active learning at the imagenet scale.
\newblock \emph{arXiv preprint arXiv:2111.12880}, 2021.

\bibitem[Gal et~al.(2017)Gal, Islam, and Ghahramani]{gal2017deep}
Gal, Y., Islam, R., and Ghahramani, Z.
\newblock Deep bayesian active learning with image data.
\newblock In \emph{International Conference on Machine Learning}, pp.\  1183--1192. PMLR, 2017.

\bibitem[Geifman \& El-Yaniv(2017)Geifman and El-Yaniv]{geifman2017deep}
Geifman, Y. and El-Yaniv, R.
\newblock Deep active learning over the long tail.
\newblock \emph{arXiv preprint arXiv:1711.00941}, 2017.

\bibitem[Hanneke et~al.(2014)]{hanneke2014theory}
Hanneke, S. et~al.
\newblock Theory of disagreement-based active learning.
\newblock \emph{Foundations and Trends{\textregistered} in Machine Learning}, 7\penalty0 (2-3):\penalty0 131--309, 2014.

\bibitem[He et~al.(2016)He, Zhang, Ren, and Sun]{he2016deep}
He, K., Zhang, X., Ren, S., and Sun, J.
\newblock Deep residual learning for image recognition.
\newblock In \emph{Proceedings of the IEEE conference on computer vision and pattern recognition}, pp.\  770--778, 2016.

\bibitem[Huang et~al.(2015)Huang, Agarwal, Hsu, Langford, and Schapire]{huang2015efficient}
Huang, T.-K., Agarwal, A., Hsu, D.~J., Langford, J., and Schapire, R.~E.
\newblock Efficient and parsimonious agnostic active learning.
\newblock \emph{Advances in Neural Information Processing Systems}, 28, 2015.

\bibitem[Jain \& Jamieson(2019)Jain and Jamieson]{jain2019new}
Jain, L. and Jamieson, K.~G.
\newblock A new perspective on pool-based active classification and false-discovery control.
\newblock \emph{Advances in Neural Information Processing Systems}, 32, 2019.

\bibitem[Jiang et~al.(2018)Jiang, Malkomes, Abbott, Moseley, and Garnett]{jiang2018efficient}
Jiang, S., Malkomes, G., Abbott, M., Moseley, B., and Garnett, R.
\newblock Efficient nonmyopic batch active search.
\newblock In \emph{32nd Conference on Neural Information Processing Systems (NeurIPS 2018)}, 2018.

\bibitem[Jin et~al.(2022)Jin, Yuan, Wang, Wang, and Song]{jin2022deep}
Jin, Q., Yuan, M., Wang, H., Wang, M., and Song, Z.
\newblock Deep active learning models for imbalanced image classification.
\newblock \emph{Knowledge-Based Systems}, 257:\penalty0 109817, 2022.

\bibitem[Katz-Samuels et~al.(2020)Katz-Samuels, Jain, Jamieson, et~al.]{katz2020empirical}
Katz-Samuels, J., Jain, L., Jamieson, K.~G., et~al.
\newblock An empirical process approach to the union bound: Practical algorithms for combinatorial and linear bandits.
\newblock \emph{Advances in Neural Information Processing Systems}, 33:\penalty0 10371--10382, 2020.

\bibitem[Katz-Samuels et~al.(2021)Katz-Samuels, Zhang, Jain, and Jamieson]{katz2021improved}
Katz-Samuels, J., Zhang, J., Jain, L., and Jamieson, K.
\newblock Improved algorithms for agnostic pool-based active classification.
\newblock In \emph{International Conference on Machine Learning}, pp.\  5334--5344. PMLR, 2021.

\bibitem[Khosla et~al.(2022)Khosla, Whye, Ash, Zhang, Kawaguchi, and Lamb]{khosla2022neural}
Khosla, S., Whye, C.~K., Ash, J.~T., Zhang, C., Kawaguchi, K., and Lamb, A.
\newblock Neural active learning on heteroskedastic distributions.
\newblock \emph{arXiv preprint arXiv:2211.00928}, 2022.

\bibitem[Kingma \& Ba(2014)Kingma and Ba]{kingma2014adam}
Kingma, D.~P. and Ba, J.
\newblock Adam: A method for stochastic optimization.
\newblock \emph{arXiv preprint arXiv:1412.6980}, 2014.

\bibitem[Kothawade et~al.(2021)Kothawade, Beck, Killamsetty, and Iyer]{kothawade2021similar}
Kothawade, S., Beck, N., Killamsetty, K., and Iyer, R.
\newblock Similar: Submodular information measures based active learning in realistic scenarios.
\newblock \emph{Advances in Neural Information Processing Systems}, 34:\penalty0 18685--18697, 2021.

\bibitem[Kremer et~al.(2014)Kremer, Steenstrup~Pedersen, and Igel]{kremer2014active}
Kremer, J., Steenstrup~Pedersen, K., and Igel, C.
\newblock Active learning with support vector machines.
\newblock \emph{Wiley Interdisciplinary Reviews: Data Mining and Knowledge Discovery}, 4\penalty0 (4):\penalty0 313--326, 2014.

\bibitem[Krizhevsky et~al.(2009)Krizhevsky, Hinton, et~al.]{krizhevsky2009learning}
Krizhevsky, A., Hinton, G., et~al.
\newblock Learning multiple layers of features from tiny images.
\newblock 2009.

\bibitem[Lin et~al.(2016)Lin, Mausam, and Weld]{lin2016re}
Lin, C., Mausam, M., and Weld, D.
\newblock Re-active learning: Active learning with relabeling.
\newblock In \emph{Proceedings of the AAAI Conference on Artificial Intelligence}, volume~30, 2016.

\bibitem[Mohamadi et~al.(2022)Mohamadi, Bae, and Sutherland]{mohamadi2022making}
Mohamadi, M.~A., Bae, W., and Sutherland, D.~J.
\newblock Making look-ahead active learning strategies feasible with neural tangent kernels.
\newblock \emph{arXiv preprint arXiv:2206.12569}, 2022.

\bibitem[M{\"u}ller et~al.(2019)M{\"u}ller, Kornblith, and Hinton]{muller2019does}
M{\"u}ller, R., Kornblith, S., and Hinton, G.~E.
\newblock When does label smoothing help?
\newblock \emph{Advances in neural information processing systems}, 32, 2019.

\bibitem[Netzer et~al.(2011)Netzer, Wang, Coates, Bissacco, Wu, and Ng]{netzer2011reading}
Netzer, Y., Wang, T., Coates, A., Bissacco, A., Wu, B., and Ng, A.~Y.
\newblock Reading digits in natural images with unsupervised feature learning.
\newblock 2011.

\bibitem[Radford et~al.(2021)Radford, Kim, Hallacy, Ramesh, Goh, Agarwal, Sastry, Askell, Mishkin, Clark, et~al.]{radford2021learning}
Radford, A., Kim, J.~W., Hallacy, C., Ramesh, A., Goh, G., Agarwal, S., Sastry, G., Askell, A., Mishkin, P., Clark, J., et~al.
\newblock Learning transferable visual models from natural language supervision.
\newblock In \emph{International conference on machine learning}, pp.\  8748--8763. PMLR, 2021.

\bibitem[Sener \& Savarese(2017)Sener and Savarese]{sener2017active}
Sener, O. and Savarese, S.
\newblock Active learning for convolutional neural networks: A core-set approach.
\newblock \emph{arXiv preprint arXiv:1708.00489}, 2017.

\bibitem[Settles(2009)]{settles2009active}
Settles, B.
\newblock Active learning literature survey.
\newblock 2009.

\bibitem[Soltani et~al.(2024)Soltani, Zhang, Salehi, Roy, Nowak, and Chowdhury]{soltani2024learning}
Soltani, N., Zhang, J., Salehi, B., Roy, D., Nowak, R., and Chowdhury, K.
\newblock Learning from the best: Active learning for wireless communications.
\newblock \emph{arXiv preprint arXiv:2402.04896}, 2024.

\bibitem[Tong \& Koller(2001)Tong and Koller]{tong2001support}
Tong, S. and Koller, D.
\newblock Support vector machine active learning with applications to text classification.
\newblock \emph{Journal of machine learning research}, 2\penalty0 (Nov):\penalty0 45--66, 2001.

\bibitem[Van~Horn et~al.(2018)Van~Horn, Mac~Aodha, Song, Cui, Sun, Shepard, Adam, Perona, and Belongie]{van2018inaturalist}
Van~Horn, G., Mac~Aodha, O., Song, Y., Cui, Y., Sun, C., Shepard, A., Adam, H., Perona, P., and Belongie, S.
\newblock The inaturalist species classification and detection dataset.
\newblock In \emph{Proceedings of the IEEE conference on computer vision and pattern recognition}, pp.\  8769--8778, 2018.

\bibitem[Wang et~al.(2021)Wang, Huang, Margenot, Tong, and He]{wang2021deep}
Wang, H., Huang, W., Margenot, A., Tong, H., and He, J.
\newblock Deep active learning by leveraging training dynamics.
\newblock \emph{arXiv preprint arXiv:2110.08611}, 2021.

\bibitem[Warmuth et~al.(2001)Warmuth, R{\"a}tsch, Mathieson, Liao, and Lemmen]{warmuth2001active}
Warmuth, M.~K., R{\"a}tsch, G., Mathieson, M., Liao, J., and Lemmen, C.
\newblock Active learning in the drug discovery process.
\newblock In \emph{NIPS}, pp.\  1449--1456, 2001.

\bibitem[Warmuth et~al.(2003)Warmuth, Liao, R{\"a}tsch, Mathieson, Putta, and Lemmen]{warmuth2003active}
Warmuth, M.~K., Liao, J., R{\"a}tsch, G., Mathieson, M., Putta, S., and Lemmen, C.
\newblock Active learning with support vector machines in the drug discovery process.
\newblock \emph{Journal of chemical information and computer sciences}, 43\penalty0 (2):\penalty0 667--673, 2003.

\bibitem[Xie et~al.(2024)Xie, Zhang, Bai, and Nowak]{xie2024deep}
Xie, T., Zhang, J., Bai, H., and Nowak, R.
\newblock Deep active learning in the open world.
\newblock \emph{arXiv preprint arXiv:2411.06353}, 2024.

\bibitem[Yang et~al.(2021)Yang, Shi, Wei, Liu, Zhao, Ke, Pfister, and Ni]{yang2021medmnist}
Yang, J., Shi, R., Wei, D., Liu, Z., Zhao, L., Ke, B., Pfister, H., and Ni, B.
\newblock Medmnist v2: A large-scale lightweight benchmark for 2d and 3d biomedical image classification.
\newblock \emph{arXiv preprint arXiv:2110.14795}, 2021.

\bibitem[Younesian et~al.(2021)Younesian, Zhao, Ghiassi, Birke, and Chen]{younesian2021qactor}
Younesian, T., Zhao, Z., Ghiassi, A., Birke, R., and Chen, L.~Y.
\newblock Qactor: Active learning on noisy labels.
\newblock In \emph{Asian Conference on Machine Learning}, pp.\  548--563. PMLR, 2021.

\bibitem[Zhang et~al.(2021)Zhang, Wang, Hou, Wu, Wang, Okumura, and Shinozaki]{zhang2021flexmatch}
Zhang, B., Wang, Y., Hou, W., Wu, H., Wang, J., Okumura, M., and Shinozaki, T.
\newblock Flexmatch: Boosting semi-supervised learning with curriculum pseudo labeling.
\newblock \emph{Advances in Neural Information Processing Systems}, 34:\penalty0 18408--18419, 2021.

\bibitem[Zhang \& Chaudhuri(2015)Zhang and Chaudhuri]{zhang2015active}
Zhang, C. and Chaudhuri, K.
\newblock Active learning from weak and strong labelers.
\newblock \emph{Advances in Neural Information Processing Systems}, 28, 2015.

\bibitem[Zhang et~al.(2022)Zhang, Katz-Samuels, and Nowak]{zhang2022galaxy}
Zhang, J., Katz-Samuels, J., and Nowak, R.
\newblock Galaxy: Graph-based active learning at the extreme.
\newblock \emph{arXiv preprint arXiv:2202.01402}, 2022.

\bibitem[Zhang et~al.(2024{\natexlab{a}})Zhang, Chen, Canal, Das, Bhatt, Mussmann, Zhu, Bilmes, Du, Jamieson, et~al.]{zhang2024labelbench}
Zhang, J., Chen, Y., Canal, G., Das, A.~M., Bhatt, G., Mussmann, S., Zhu, Y., Bilmes, J., Du, S.~S., Jamieson, K., et~al.
\newblock Labelbench: A comprehensive framework for benchmarking adaptive label-efficient learning.
\newblock \emph{Journal of Data-centric Machine Learning Research}, 2024{\natexlab{a}}.

\bibitem[Zhang et~al.(2024{\natexlab{b}})Zhang, Shao, Verma, and Nowak]{zhang2024algorithm}
Zhang, J., Shao, S., Verma, S., and Nowak, R.
\newblock Algorithm selection for deep active learning with imbalanced datasets.
\newblock \emph{Advances in Neural Information Processing Systems}, 36, 2024{\natexlab{b}}.

\end{thebibliography}
\bibliographystyle{icml2025}

\newpage
\onecolumn
\appendix
\section{Equivalent Objective} \label{apx:proof}
\begin{lemma}
    The agnostic active learning reduction is equivalently finding the optimal separation threshold. Namely,
    \begin{align*}
        \argmin_j \mc{L}(h_j) = \argmax_j \left(\lvert\{y_{(i)} = 1 : 1\leq i\leq j\}\rvert - \lvert\{y_{(i)} = 2 : 1 \leq i\leq j\}\rvert\right)
    \end{align*}
\end{lemma}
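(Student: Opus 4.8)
The plan is to unfold the definition of the empirical zero-one loss $\mc{L}(h_j)$ and show it is an affine function of the target discrepancy whose only $j$-independent term is a constant. Since $h_j$ predicts class $1$ on the indices $i \le j$ and class $2$ on the indices $i > j$, an example $x_{(i)}$ is misclassified by $h_j$ exactly when either $i \le j$ and $y_{(i)} = 2$, or $i > j$ and $y_{(i)} = 1$. Hence
\begin{align*}
\mc{L}(h_j) = \big\lvert\{i \le j : y_{(i)} = 2\}\big\rvert + \big\lvert\{i > j : y_{(i)} = 1\}\big\rvert.
\end{align*}

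Next I would eliminate the "$i > j$" term using the fact that the total number of class-$1$ examples, $N_1 := \lvert\{i : y_{(i)} = 1\}\rvert$, does not depend on $j$, so that $\lvert\{i > j : y_{(i)} = 1\}\rvert = N_1 - \lvert\{i \le j : y_{(i)} = 1\}\rvert$. Substituting and collecting terms yields
\begin{align*}
\mc{L}(h_j) = N_1 - \Big(\big\lvert\{i \le j : y_{(i)} = 1\}\big\rvert - \big\lvert\{i \le j : y_{(i)} = 2\}\big\rvert\Big).
\end{align*}
Because $N_1$ is constant in $j$, minimizing $\mc{L}(h_j)$ over $j \in \{0,1,\dots,N\}$ is the same as maximizing the parenthesized discrepancy, which is precisely the right-hand side of the claimed identity; this establishes equality of the argmin and argmax (as sets).

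I would then close with a short remark on the tie-breaking convention: since the two objectives differ only by the additive constant $N_1$, they have identical sets of optimizers, so any fixed tie-breaking rule — e.g. the largest-/smallest-index convention stated just after the definition — selects the same $j^\star$ on both sides. The multi-class (class-$k$ versus rest) statement follows verbatim, replacing $y_{(i)}=1$ by $y_{(i)}=k$, $y_{(i)}=2$ by $y_{(i)}\neq k$, and $N_1$ by $N_k$.

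There is no substantive obstacle here: the argument is a one-line counting identity. The only point requiring care is bookkeeping at the boundary indices $j=0$, where $h_0$ predicts class $2$ everywhere so $\mc{L}(h_0) = N_1$ (consistent with the formula, since the discrepancy is $0$), and $j=N$, where the "$i>j$" set is empty; both are handled automatically by the displayed formula, so the only thing to double-check in writing the final proof is that these endpoints are included in the range of the argmin/argmax so that the reduction is faithful.
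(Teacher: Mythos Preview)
Your proposal is correct and follows essentially the same counting argument as the paper's proof: both show that $\mc{L}(h_j)$ differs from the discrepancy $\lvert\{i\le j:y_{(i)}=1\}\rvert-\lvert\{i\le j:y_{(i)}=2\}\rvert$ by a $j$-independent constant, so the argmin and argmax coincide. The only cosmetic difference is that the paper first rewrites $\mc{L}(h_j)=N-\#\{\text{correct}\}$ and then subtracts the constant $N_2$, whereas you work directly with the error decomposition and subtract $N_1$; your additional remarks on tie-breaking and the boundary cases $j=0,N$ are sound and go slightly beyond what the paper spells out.
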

\begin{proof}
    Recall the definitions: $h_j(q) = \begin{cases}
    1 & \text{if} \, q \leq q_{(j)} \\
    2 & \text{if} \, q > q_{(j)}
\end{cases}$ and $\mc{L}(h_j) = \sum_{i=1}^N \1\{h_j(q_{(i)}) \neq y_{(i)}\}$, we can expand the loss as follows
\begin{align*}
    \argmin_j \mc{L}(h_j) &= \argmin_j \sum_{i=1}^N \1\{h_j(q_{(i)}) \neq y_{(i)}\} \\
    &= \argmin_j N - \sum_{i=1}^N \1\{h_j(q_{(i)}) = y_{(i)}\}\\
    &= \argmax_j \sum_{i=1}^N \1\{h_j(q_{(i)}) = y_{(i)}\}\\
    &=  \argmax_j \left(\sum_{i=1}^{j} \1\{y_{(i)} = 1\}\right) + \left(\sum_{i=j+1}^{N} \1\{y_{(i)} = 2\}\right) \\
    &=  \argmax_j \left(\sum_{i=1}^{j} \1\{y_{(i)} = 1\}\right) + \left(\sum_{i=j+1}^{N} \1\{y_{(i)} = 2\}\right) -  \left(\sum_{i=1}^{N} \1\{y_{(i)} = 2\}\right)\\
    &=  \argmax_j \sum_{i=1}^{j} \left(\1\{y_{(i)} = 1\} - \1\{y_{(i)} = 2\}\right)
\end{align*}
\end{proof}

\section{Theoretical Analysis}\label{apx:theory}
In this section, we analyze the performance of GALAXY under random label noise and show the probability of identifying and sampling around additional cuts increases as more examples are labeled. This is in contrast to the DIRECT's agnostic active learning approach, where the probability of identifying and sampling around only the optimal separation threshold decays exponentially in the number of labeling budget.

Specifically, under the binary classification scenario, one is given a sorted list of $N$ examples $\{x_{(i)}\}_{i=1}^N$, with ground truth labels $y_{(1)}^\star = y_{(2)}^\star = ... = y_{(N_1)}^\star = 1$ and $y_{(N_1 + 1)}^\star = ...= y_{(N_1 + N_2)}^\star = 2$, where $N_1 + N_2 = N$. Under uniform i.i.d. label noise with noise ratio $\eta > 0$, the \emph{observed labels} are denoted as $\{y_{(i)}\}_{i=1}^N$, where $\P(y_{(i)} \neq y_{(i)}^\star) = \eta$. In other words, the observed label is flipped with probability $\eta$.

\begin{theorem}
    Given a budget of $b > 2\log N$, let $M_b$ be the random variable denoting number of identified cuts in addition to the optimal separation threshold by one round of GALAXY. We must have $\P(M_b \geq 1) \geq 1-\exp(-b\log(\frac{1}{1-\eta})/2)$, implying GALAXY samples around at least one more cut in addition to the optimal separation threshold with high probability.
\end{theorem}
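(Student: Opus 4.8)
The plan is to analyze a single round of GALAXY by revealing the label-noise coin flips \emph{sequentially}, and to show that before GALAXY has spent $b/2$ of its $b$ queries it must, with probability at least $1-(1-\eta)^{b/2}$, have queried a \emph{flipped majority example} --- an $x_{(i)}$ with true index $i>N_1$ but observed label $1$ --- which then forces GALAXY to localize a cut distinct from the optimal separation threshold $j^\star$. Recall first how GALAXY operates on the sorted linear graph $x_{(1)}-\cdots-x_{(N)}$: it repeatedly takes an interval of the graph still separating two labeled vertices with opposite observed labels and queries its midpoint, halving that interval, until the interval collapses to an adjacent pair with differing observed labels (an ``identified cut''). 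As long as every label revealed so far equals the true block label ($1$ on $\{1,\dots,N_1\}$, $2$ on $\{N_1+1,\dots,N\}$), the observed graph is monotone and contains a single gap $(\ell,r)$ with $\ell\le N_1<r$, and GALAXY's next query is its midpoint.

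Next I would locate where these queries land. Using $N_1<N_2$ (so $N_1<N/2$), the bisection that converges toward index $N_1$ from the initial gap steps through positions near $N/2, N/4,\dots$, the early ones all lying in the majority block $\{N_1+1,\dots,N\}$; and once the $\le\log N$ queries needed to localize the first cut are spent, GALAXY's remaining budget is spent near the decision boundary, which is majority-heavy precisely by the phenomenon illustrated in Figure~\ref{fig:galaxy}. Using $b>2\log N$ to guarantee there are more than $b/2$ queries available beyond the first localization, I would argue that at least $b/2$ of the round's queries fall in the majority block. Since, conditioned on no flip having been revealed yet, each such query is a \emph{fresh} example whose label is an independent $\mathrm{Ber}(\eta)$ flip, the chain rule gives: either some revealed label is a flipped majority label, or the first $b/2$ majority queries are all unflipped, the latter event having probability exactly $(1-\eta)^{b/2}$.

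It remains to check that a single revealed flip genuinely produces a cut counted by $M_b$. On the event that a flipped majority example $x_{(i)}$, $i>N_1$, is revealed, GALAXY's bisection of the current gap continues to the right of $x_{(i)}$ and terminates at a cut edge $(x_{(p)},x_{(p+1)})$ with $p\ge i>N_1$, lying strictly inside the majority block. A short concentration argument --- Hoeffding applied to the noisy empirical loss $\mc{L}(h_j)$, whose minimizer is $j^\star$ by the Appendix~\ref{apx:proof} equivalence, valid for $\eta<1/2$ --- shows $j^\star$ stays within $O(\sqrt{N\log N})$ of $N_1$ with high probability, so for the flip-induced cut at $p$ (which sits well inside the majority block) we get $p\ne j^\star$, hence it is an ``additional'' cut. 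Combining this with the previous paragraph yields $\P(M_b=0)\le(1-\eta)^{b/2}=\exp(-b\log(1/(1-\eta))/2)$, which is the claimed bound.

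The main obstacle I anticipate is the second paragraph: rigorously pinning down where GALAXY's adaptively chosen queries fall --- in particular that at least half of them lie in the majority block --- since this depends on GALAXY's precise tie-breaking and on its fallback behavior once all opposite-label gaps are resolved, and one must ensure the sequential-revelation argument correctly decouples the (random) identity of the queried examples from their noise flips. A secondary subtlety is making the ``distinct from $j^\star$'' step fully rigorous, which is where the auxiliary concentration bound on $j^\star$ --- and the implicit assumption $\eta<1/2$ --- is needed.
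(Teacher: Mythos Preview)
Your plan works in spirit but takes an unnecessary detour, and the step you yourself flag as the main obstacle is precisely the one the paper sidesteps. In the theorem's idealized setup the ground-truth labels are already perfectly sorted ($y^\star_{(1)}=\cdots=y^\star_{(N_1)}=1$, $y^\star_{(N_1+1)}=\cdots=y^\star_{(N)}=2$), so the optimal separation threshold is simply the index $N_1$; no Hoeffding argument to locate $j^\star$ under noise is needed. More importantly, the paper does not restrict to majority-block queries at all. Its observation is that \emph{any} corrupted label among the first $b/2$ annotations forces GALAXY to identify a cut distinct from $N_1$: a flipped minority example at position $i\le N_1$ (observed as label~$2$) sends bisection leftward and localizes a cut at some $p<N_1$, while a flipped majority example at $i>N_1$ (observed as label~$1$) sends it rightward to a cut at some $p\ge i>N_1$. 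Either way $M_b\ge 1$. Since each of the first $b/2$ queries hits a fresh example whose flip is an independent $\mathrm{Ber}(\eta)$, one gets $\P(M_b=0)\le (1-\eta)^{b/2}=\exp\bigl(-\tfrac{b}{2}\log\tfrac{1}{1-\eta}\bigr)$. The hypothesis $b>2\log N$ is there only to guarantee enough remaining budget after a flip in the first half to actually finish localizing the extra cut.

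By insisting that the triggering flip occur in the majority block, you manufactured the need to show that at least $b/2$ of GALAXY's adaptive queries land there --- a claim that depends on tie-breaking and fallback behavior and that you leave unproven. Drop that restriction, read ``optimal separation threshold'' as the true-label cut $N_1$, and your three-paragraph program collapses to the paper's two-line argument.
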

\begin{proof}
    In the perfect scenario where GALAXY does not receive any corrupted labels, it would use $\log N$ budget with bisection to find the optimal separation threshold and annotate around it.
    However, within the first $\frac{b}{2}$ annotations, whenever GALAXY receives a corrupted label, it will identify a cut in addition to the optimal separation threshold, i.e., $M_b \geq 1$. Therefore, the probability of $M_b \geq 1$ is greater than the probability of receiving at least one corrupted labels in the first $\frac{b}{2}$ annotations. With simple probability bound, we can show that
    \begin{align*}
        \P(M_b \geq 1) > 1 - (1-\eta)^{b/2} = 1 - \exp(b\log(1-\eta)/2) = 1 - \exp(-b\log(\frac{1}{1-\eta})/2).
    \end{align*}
\end{proof}
As the theorem suggests, when $b$ is large, GALAXY will identify and annotate around at least one additional cut with high probability.

\section{Time Complexity} \label{apx:complexity}
The computation complexity for each batch of DIRECT is $O(KN\log(N) + B_{\text{train}} N)$ for data selection plus the training and inference costs of the neural network. $O(KN\log(N))$ comes from sorting examples by their margin scores for each class and $O(B_{\text{train}} N)$ is the cost for running Algorithm 2 for $O(B_{\text{train}})$ iterations. Each iteration of Algorithm 2 only costs $O(N)$ time as we can efficiently solve the objective by cumulative sums. We note that the cost associated with neural network training and inference is always the dominating factor. 

For comparisons, BADGE has time complexity $O(B_{\text{train}}N(K + D))$, significantly more expensive than DIRECT, with $D$ denotes the dimensionality of the penultimate layer features. In addition, GALAXY has computational complexity of $O(KN\log(N)) + B_{\text{train}} KN$, also more expensive than DIRECT. In all of our experiments, both BADGE and GALAXY indeed is slower than DIRECT. We further note that the time complexity factor of $K$ in DIRECT can be easily parallelized by conducting the $K$ sorting procedures on different CPU cores.

Below, we also provide a comprehensive list of computational complexity of different algorithms we consider in our implementation:
As for data selection algorithms, let $K$ be the number of classes, $N$ be the pool size and $B_{\text{train}}$ be the batch size, $D$ be the penultimate layer embedding dimension and $T$ be the number of batches. Below, we detail the computation cost of data selection of each algorithm we consider.
\begin{itemize}
    \item DIRECT: $O(T(KN\log N + B_{train}N))$.

    \item GALAXY: $O(T(KN\log N + B_{train}KN))$

    \item BADGE: $O(TB_{train}N(K + D))$

    \item Margin sampling/most likely positive/confidence sampling: $O(TKN)$

    \item Coreset: $O(T^2B_{train}ND)$
    
    \item SIMILAR: $O(TB_{train}ND)$

    \item Cluster margin: $O(N^2\log N + TN(K + \log N))$

    \item BASE: $O(TN(D+B_{train}))$
\end{itemize}

Overall, our experiments are conducted on NVIDIA 3090 ti GPUs. Each trial of the ResNet-18 experiment takes less than two hours while each trial of the LabelBench experiments takes roughly 12 hours.

\section{Experiment Setup} \label{sec:setup}
\begin{table}[t]
    \begin{center}
    \begin{small}
    \begin{tabular}{lccc}
    \toprule
    Name & \shortstack{$K$} & $N$ & \shortstack{Imb Ratio\\$\gamma = \frac{\min_k N_k}{\max_{k'} N_{k'}}$}  \\
    \midrule
    Imb CIFAR-10 & $2$ & $50000$ & $.1111$\\
    Imb CIFAR-10 & $3$ & $50000$ & $.1250$\\
    Imb CIFAR-100 & $2$ & $50000$ & $.0101$\\
    Imb CIFAR-100 & $3$ & $50000$ & $.0102$\\
    Imb CIFAR-100 & $10$ & $50000$ & $.0110$\\
    Imb SVHN & $2$ & $73257$ & $.0724$\\
    Imb SVHN & $3$ & $54448$ & $.2546$\\
    PathMNIST & $2$ & $89996$ & $.1166$\\
    FMoW & $62$ & $76863$ & $.0049$\\
    iWildCam & $14$ & $129809$ & $4.57 \cdot 10^{-5}$\\
    \bottomrule
    \end{tabular}
    \end{small}
    \end{center}
    \caption{Dataset settings for our experiments. $N$ denotes the total number of examples in our dataset. $\gamma$ is the class imbalance ratio defined in Section~\ref{sec:notation}.}
    \label{tab:dataset}
\end{table}
\textbf{ResNet-18 Experiments.} ResNet-18 with passive training has been the standard evaluation in existing deep active literature~\citep{ash2019deep,zhang2022galaxy}. Our experiment setup utilizes the CIFAR-10, CIFAR-100~\citep{krizhevsky2009learning}, SVHN~\citep{netzer2011reading} and PathMNIST~\citep{yang2021medmnist} image classification datasets. The original forms of these datasets are roughly balanced across 9, 10 or 100 classes. We construct an extremely imbalanced dataset by grouping a large number of classes into one majority class. For example, given a balanced dataset above with $M$ classes. We generate an imbalanced dataset with $K$ classes ($K < M$) by the first $K - 1$ classes from the original dataset and combining the rest of the classes $K,...,M$ into a single majority class $K$. Imbalance ratios are shown in Table~\ref{tab:dataset}. In addition, we also utilize the standard CIFAR-10LT and CIFAR-100LT variants in our experiments for noisy label setting.

For neural network training, we utilize the standard passive training on labeled examples with cross entropy loss and Adam optimizer~\citep{kingma2014adam}. The ResNet-18 model~\citep{he2016deep} is pretrained on ImageNet~\citep{deng2009imagenet} from the PyTorch library. To address data imbalance, for all algorithms, we utilize a reweighted cross entropy loss by the inverse frequency of the number of labeled examples in each class. For experiments with label noise, we further add a $10\%$ label smoothing during training~\citep{muller2019does} for all algorithms.

\textbf{LabelBench Experiments.} Proposed by \citet{zhang2024labelbench}, LabelBench evaluates active learning performance in a more comprehensive framework. Here, we fine-tune the large pretrained model from CLIP's ViT-B32 model~\citep{radford2021learning}. The framework also utilizes semi-supervised learning method FlexMatch~\citep{zhang2021flexmatch} to further leverage the unlabeled examples in the pool for training. We conduct experiments on the two imbalanced datasets in LabelBench, with FMoW~\citep{christie2018functional} and iWildcam~\citep{beery2021iwildcam}. Similar to the ResNet-18 experiments, for all algorithms, we use a $10\%$ label smoothing in the loss function to improve training under label noise. We did find FlexMatch to perform poorly under the combination of imbalance and label noise, so we used the passive training method for label noise experiments.

\section{All Results} \label{apx:all_results}
\subsection{Noiseless Results under Imbalance} \label{apx:noiseless}
\begin{figure*}[h!]
    \centering
    \begin{subfigure}[t]{.49\textwidth}
        \centering
        \includegraphics[width=\textwidth]{figures/cifar_unbalanced_2_accuracy.pdf}
    \end{subfigure}
    \begin{subfigure}[t]{.49\textwidth}
        \centering
        \includegraphics[width=\textwidth]{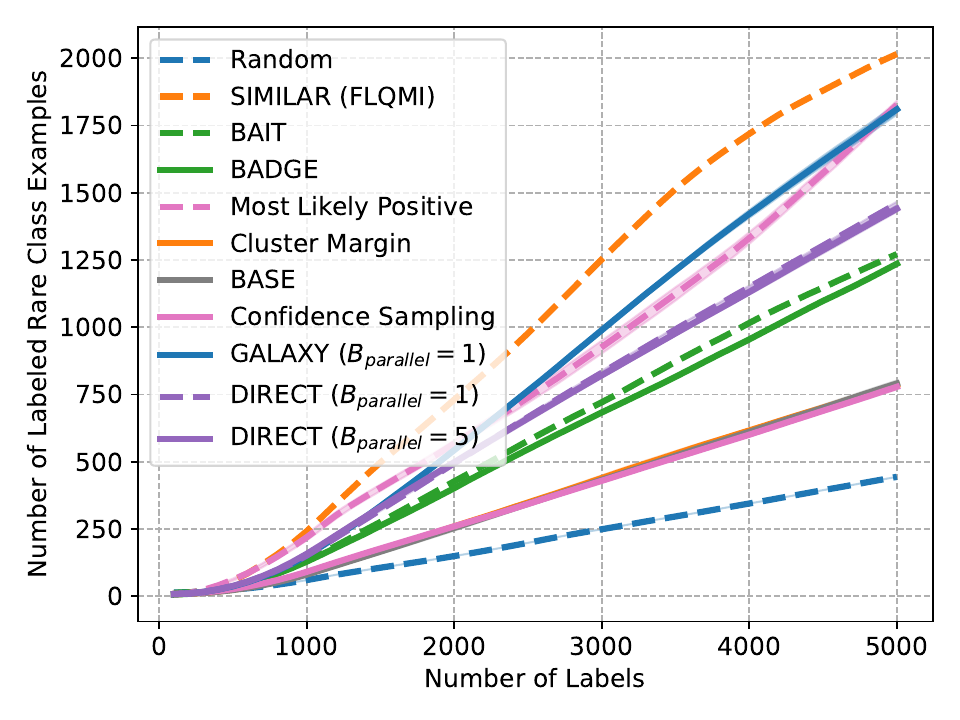}
    \end{subfigure}
    \caption{Imbalanced CIFAR-10, two classes.}
\end{figure*}
\begin{figure*}[h!]
    \centering
    \begin{subfigure}[t]{.49\textwidth}
        \centering
        \includegraphics[width=\textwidth]{figures/cifar_unbalanced_3_accuracy.pdf}
    \end{subfigure}
    \begin{subfigure}[t]{.49\textwidth}
        \centering
        \includegraphics[width=\textwidth]{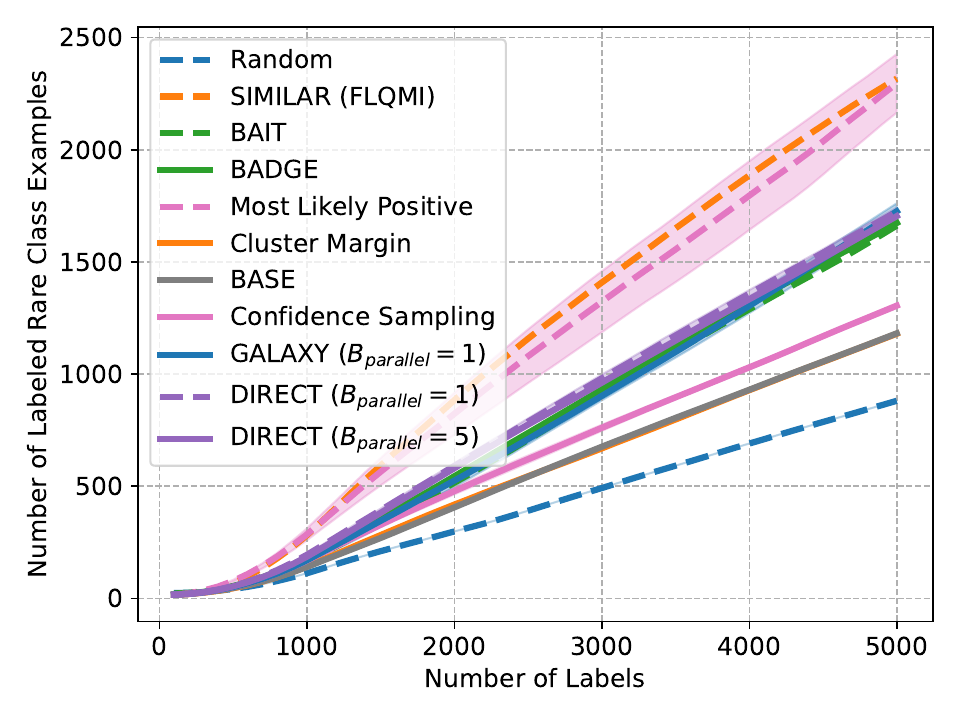}
    \end{subfigure}
    \caption{Imbalanced CIFAR-10, three classes.}
\end{figure*}
\begin{figure*}[h!]
    \centering
    \begin{subfigure}[t]{.49\textwidth}
        \centering
        \includegraphics[width=\textwidth]{figures/cifar100_unbalanced_2_accuracy.pdf}
    \end{subfigure}
    \begin{subfigure}[t]{.49\textwidth}
        \centering
        \includegraphics[width=\textwidth]{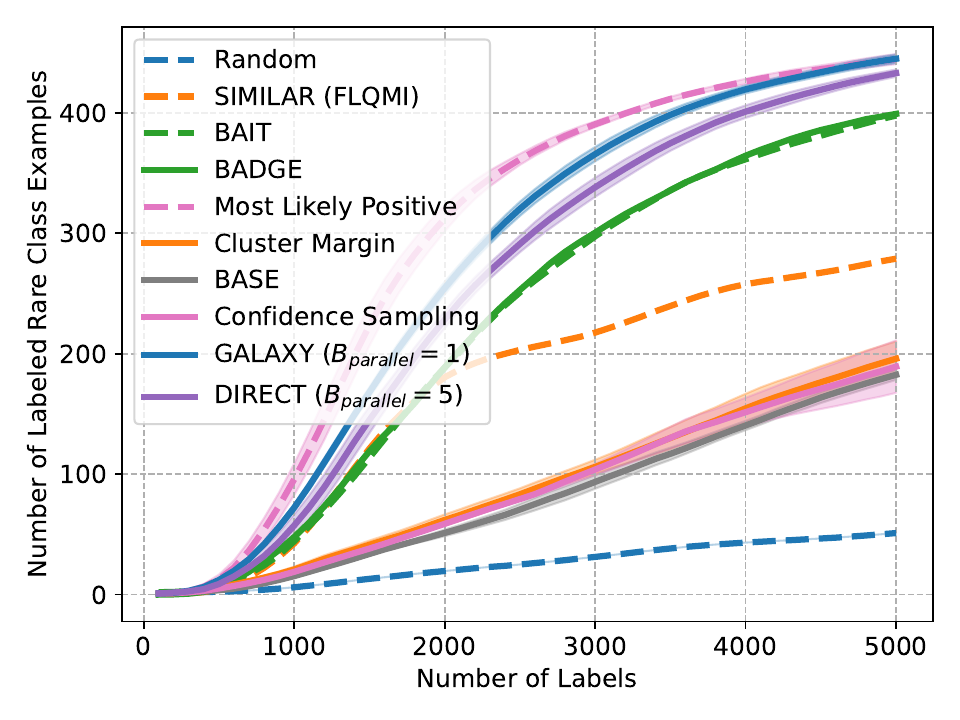}
    \end{subfigure}
    \caption{Imbalanced CIFAR-100, two classes.}
\end{figure*}
\begin{figure*}[h!]
    \centering
    \begin{subfigure}[t]{.49\textwidth}
        \centering
        \includegraphics[width=\textwidth]{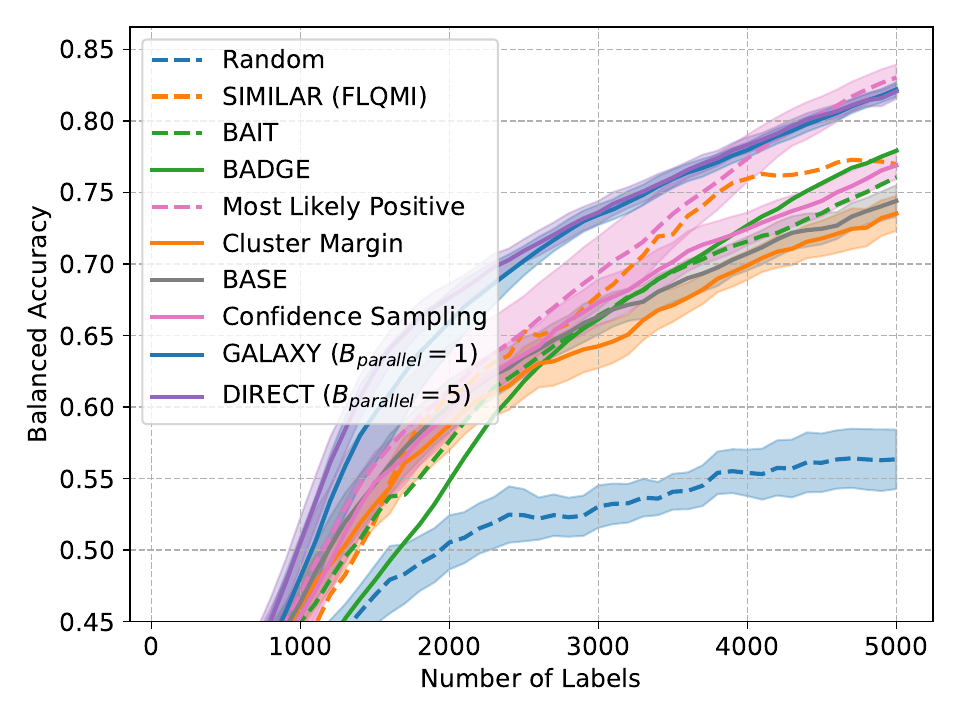}
    \end{subfigure}
    \begin{subfigure}[t]{.49\textwidth}
        \centering
        \includegraphics[width=\textwidth]{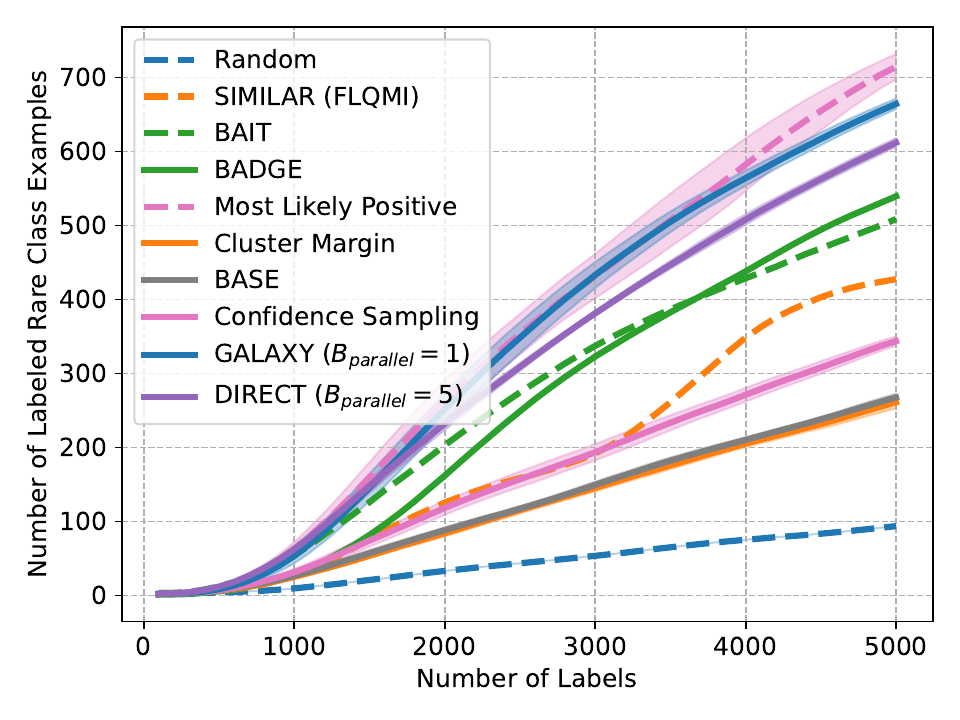}
    \end{subfigure}
    \caption{Imbalanced CIFAR-100, three classes.}
\end{figure*}
\begin{figure*}[h!]
    \centering
    \begin{subfigure}[t]{.49\textwidth}
        \centering
        \includegraphics[width=\textwidth]{figures/cifar100_unbalanced_10_accuracy.pdf}
    \end{subfigure}
    \begin{subfigure}[t]{.49\textwidth}
        \centering
        \includegraphics[width=\textwidth]{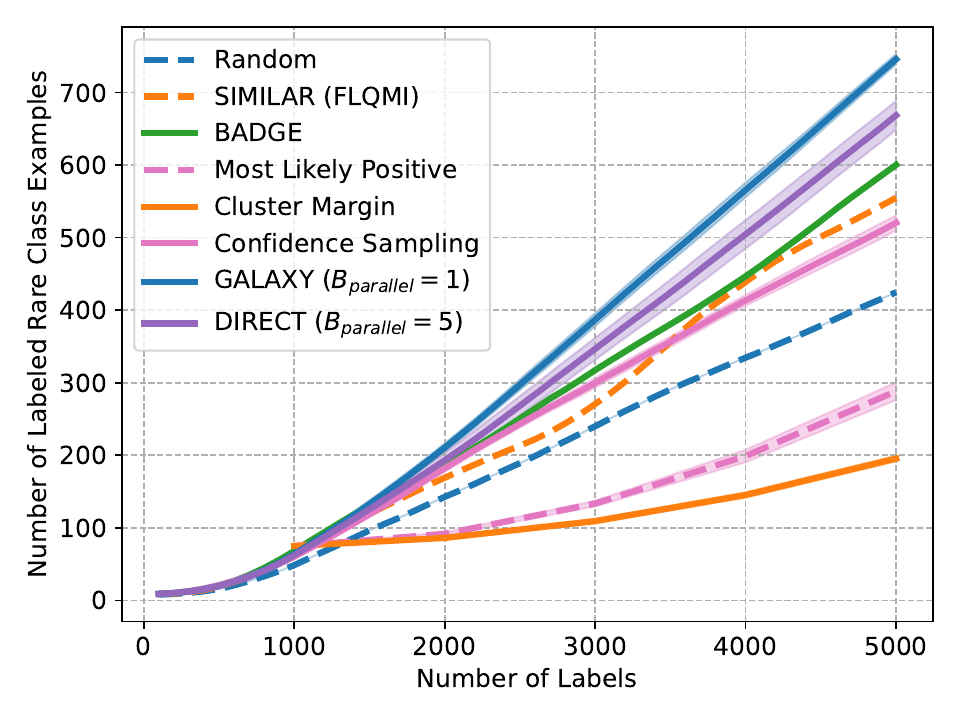}
    \end{subfigure}
    \caption{Imbalanced CIFAR-100, 10 classes.}
\end{figure*}
\begin{figure*}[h!]
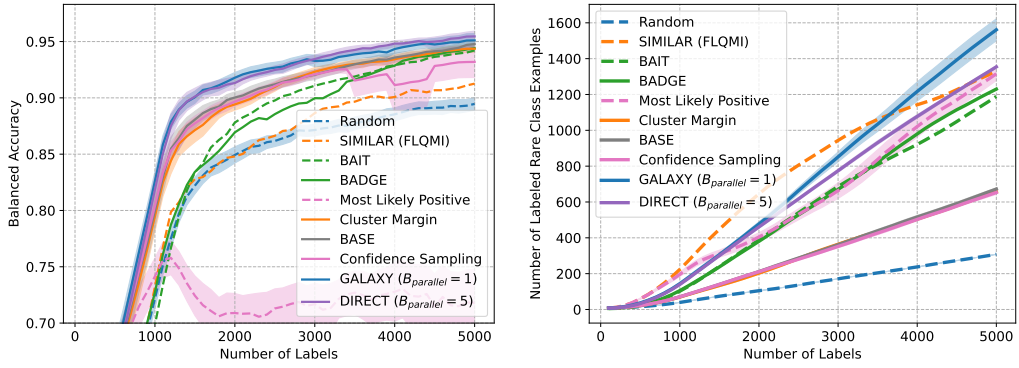

    \centering
    \begin{subfigure}[t]{.49\textwidth}
        \centering
        \includegraphics[width=\textwidth]{figures/svhn_unbalanced_2_accuracy.pdf}
    \end{subfigure}
    \begin{subfigure}[t]{.49\textwidth}
        \centering
        \includegraphics[width=\textwidth]{figures/svhn_unbalanced_2_labels.pdf}
    \end{subfigure}
    \caption{Imbalanced SVHN, two classes.}
\end{figure*}
\begin{figure*}[h!]
    \centering
    \begin{subfigure}[t]{.49\textwidth}
        \centering
        \includegraphics[width=\textwidth]{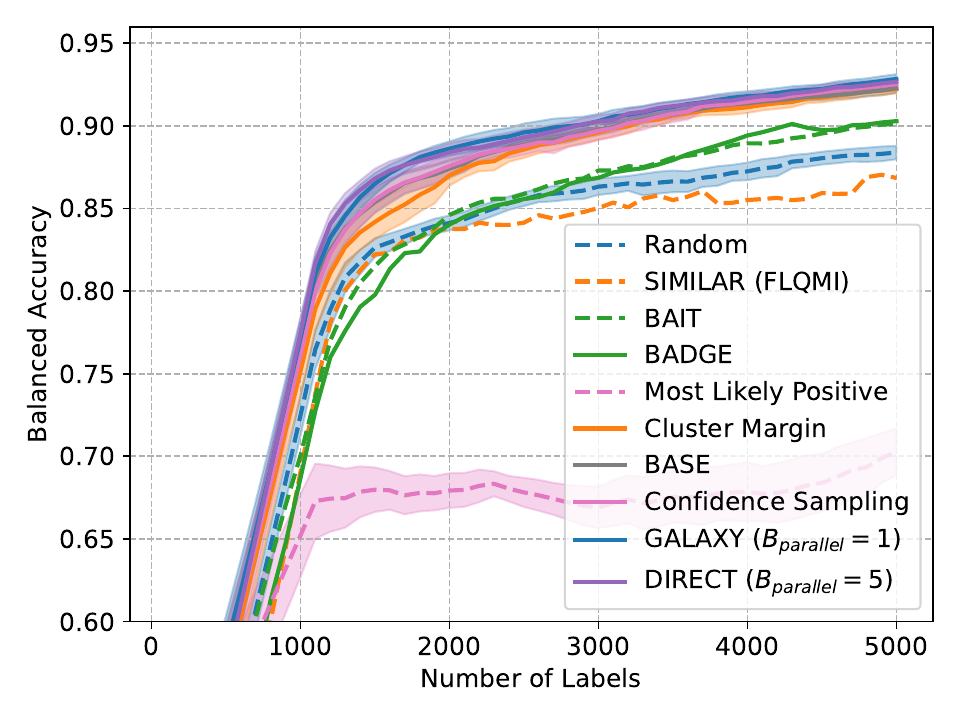}
    \end{subfigure}
    \begin{subfigure}[t]{.49\textwidth}
        \centering
        \includegraphics[width=\textwidth]{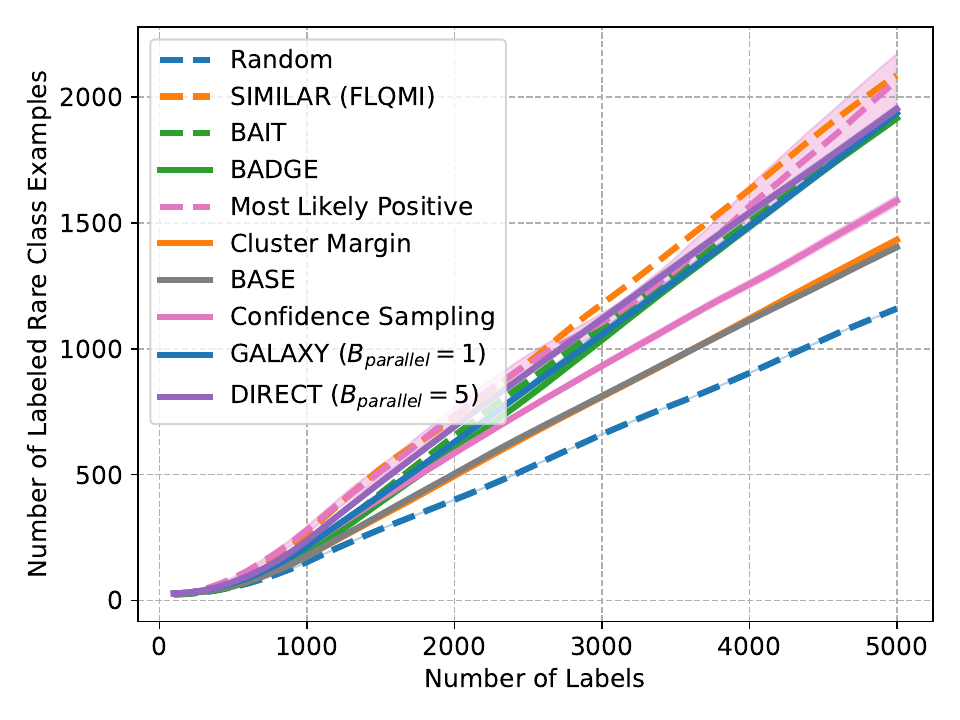}
    \end{subfigure}
    \caption{Imbalanced SVHN, three classes.}
\end{figure*}
\begin{figure*}[h!]
    \centering
    \begin{subfigure}[t]{.49\textwidth}
        \centering
        \includegraphics[width=\textwidth]{figures/medmnist_accuracy_legend.pdf}
    \end{subfigure}
    \begin{subfigure}[t]{.49\textwidth}
        \centering
        \includegraphics[width=\textwidth]{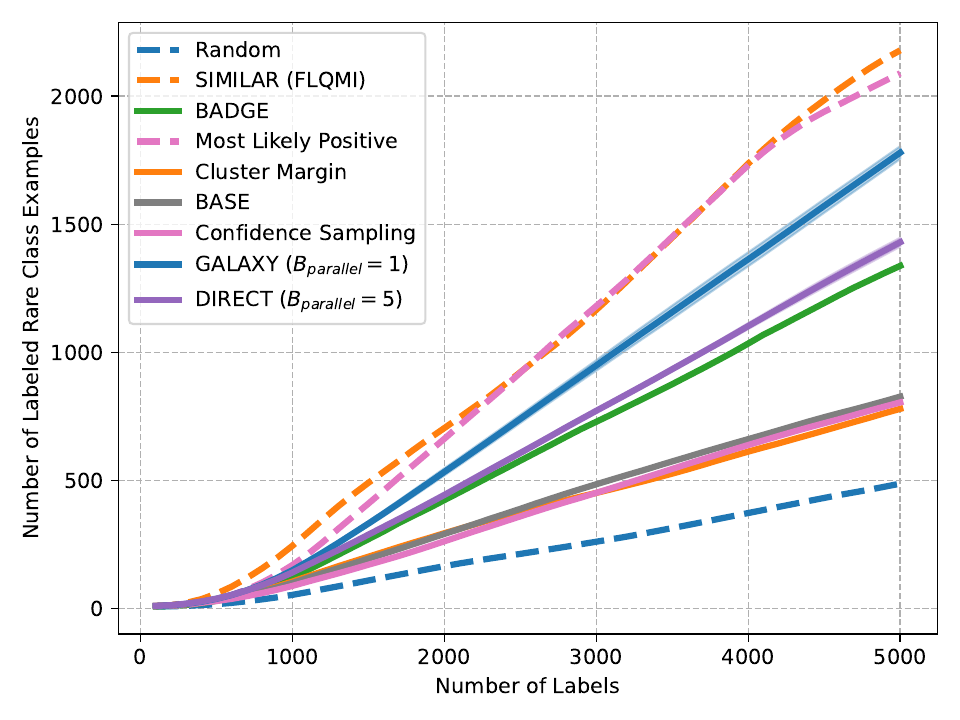}
    \end{subfigure}
    \caption{PathMNIST, two classes.}
\end{figure*}
\begin{figure*}[h!]
    \centering
    \begin{subfigure}[t]{.49\textwidth}
        \centering
        \includegraphics[width=\textwidth]{figures/fmow_3000_none_clip_ViTB32_flexmatch_Balanced_Training_Accuracy.pdf}
        \caption{FMoW Balanced Pool Accuracy}
    \end{subfigure}
    \begin{subfigure}[t]{.49\textwidth}
        \centering
        \includegraphics[width=\textwidth]{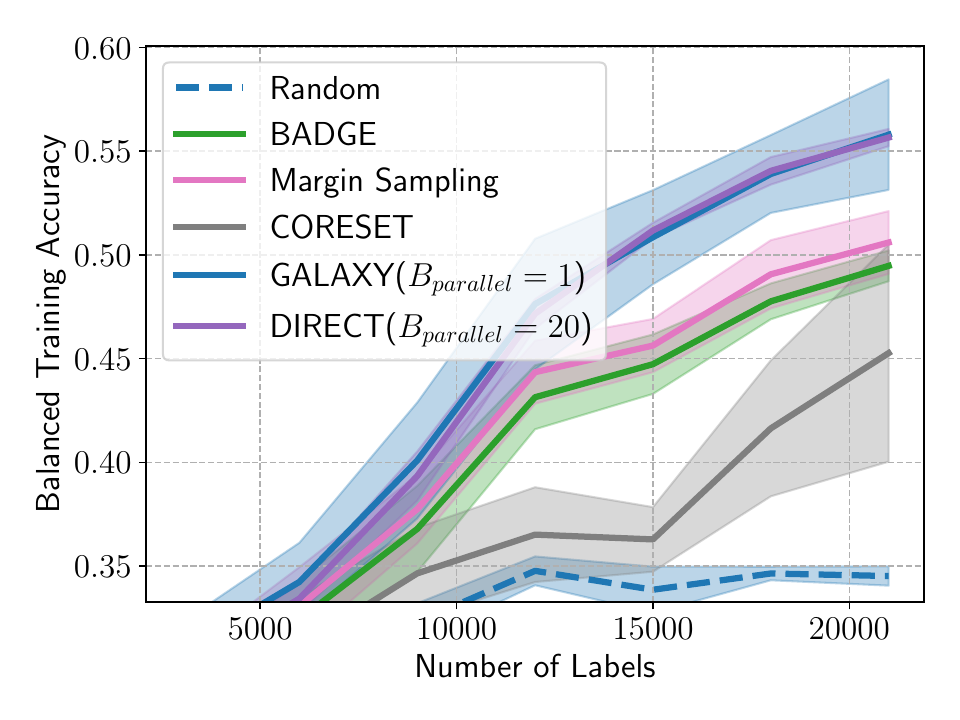}
        \caption{iWildcam Balanced Pool Accuracy}
    \end{subfigure}
    \begin{subfigure}[t]{.49\textwidth}
        \centering
        \includegraphics[width=\textwidth]{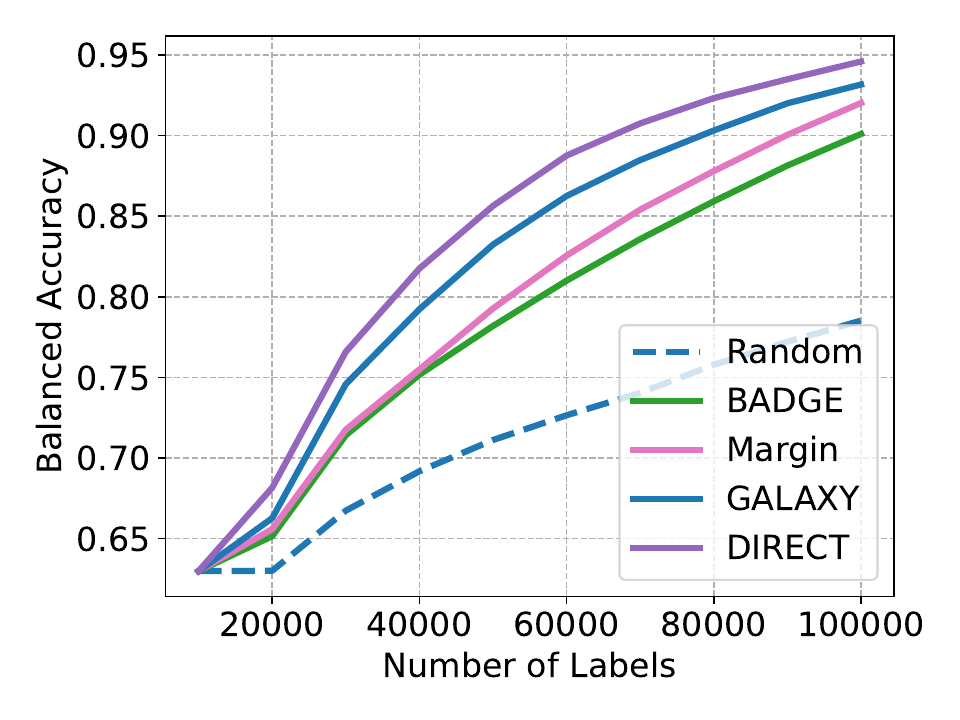}
        \caption{ImageNet-LT Balanced Pool Accuracy}
    \end{subfigure}
    \begin{subfigure}[t]{.49\textwidth}
        \centering
        \includegraphics[width=\textwidth]{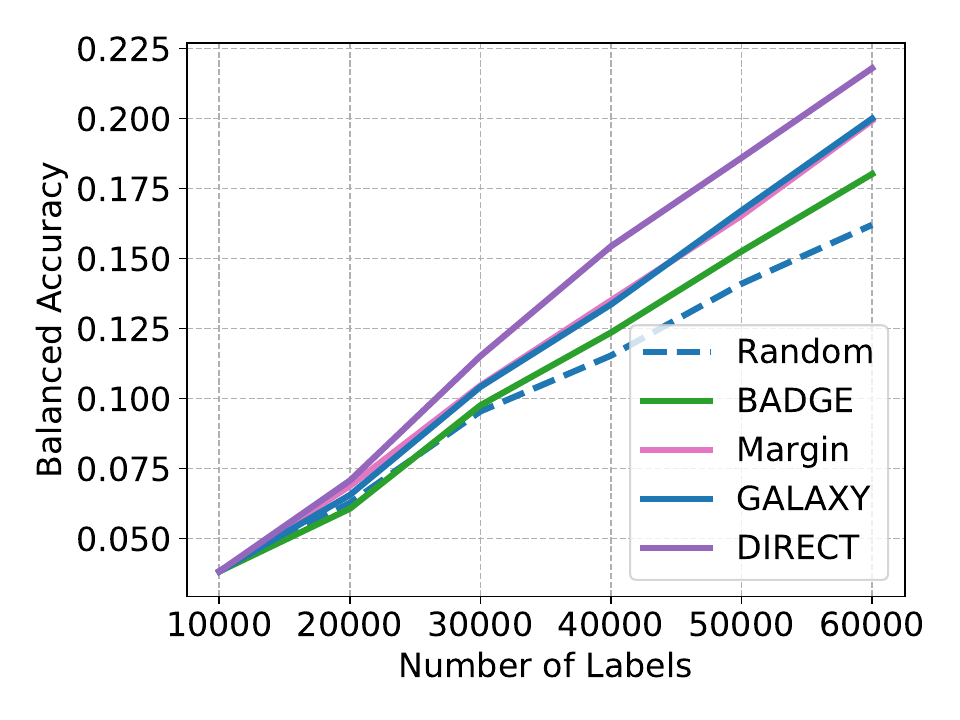}
        \caption{iNaturalist Balanced Pool Accuracy}
    \end{subfigure}

    \caption{LabelBench results in the noiseless setting.}
\end{figure*}

\newpage
\subsection{Label Noise Results under Imbalance} \label{apx:noisy}
\begin{figure*}[h!]
    \centering
    \begin{subfigure}[t]{.49\textwidth}
        \centering
        \includegraphics[width=\textwidth]{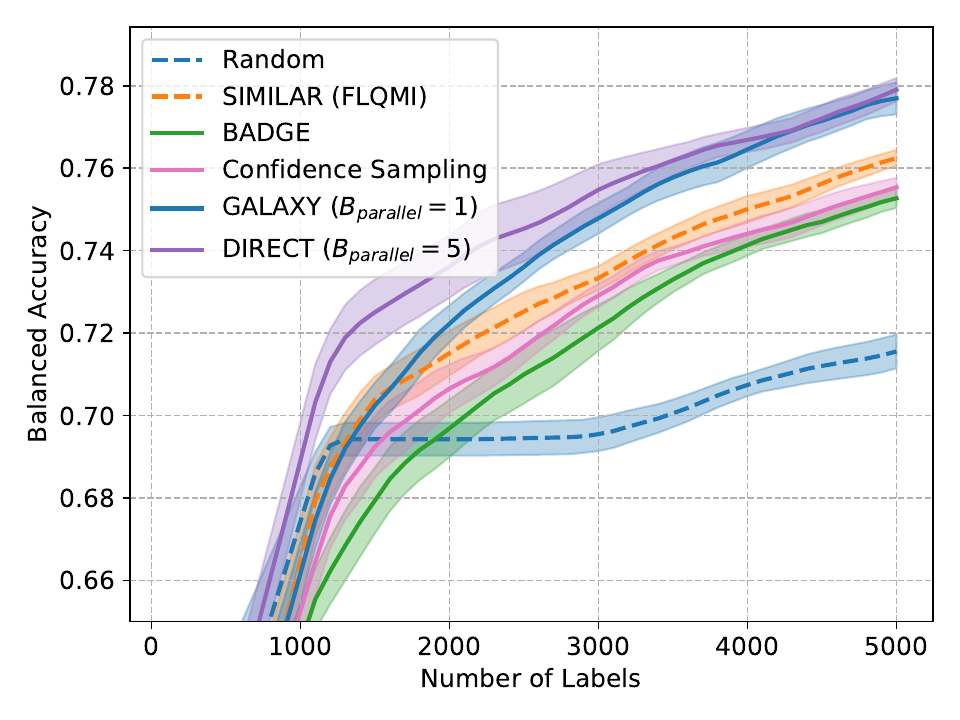}
    \end{subfigure}
    \begin{subfigure}[t]{.49\textwidth}
        \centering
        \includegraphics[width=\textwidth]{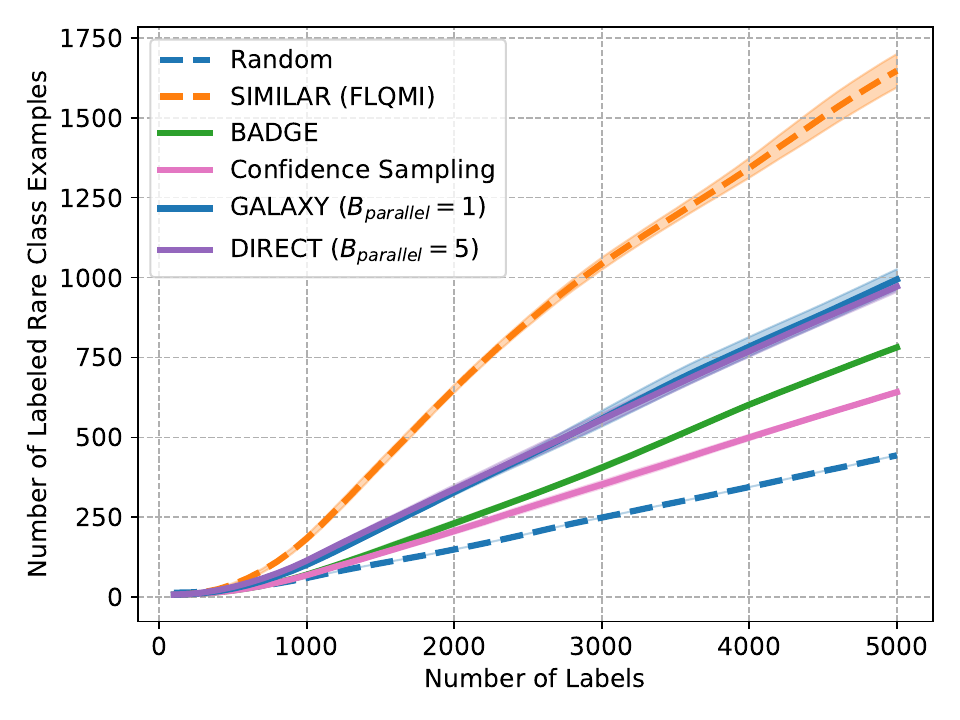}
    \end{subfigure}
    \caption{Imbalanced CIFAR-10, two classes, 10\% label noise.}
\end{figure*}
\begin{figure*}[h!]
    \centering
    \begin{subfigure}[t]{.49\textwidth}
        \centering
        \includegraphics[width=\textwidth]{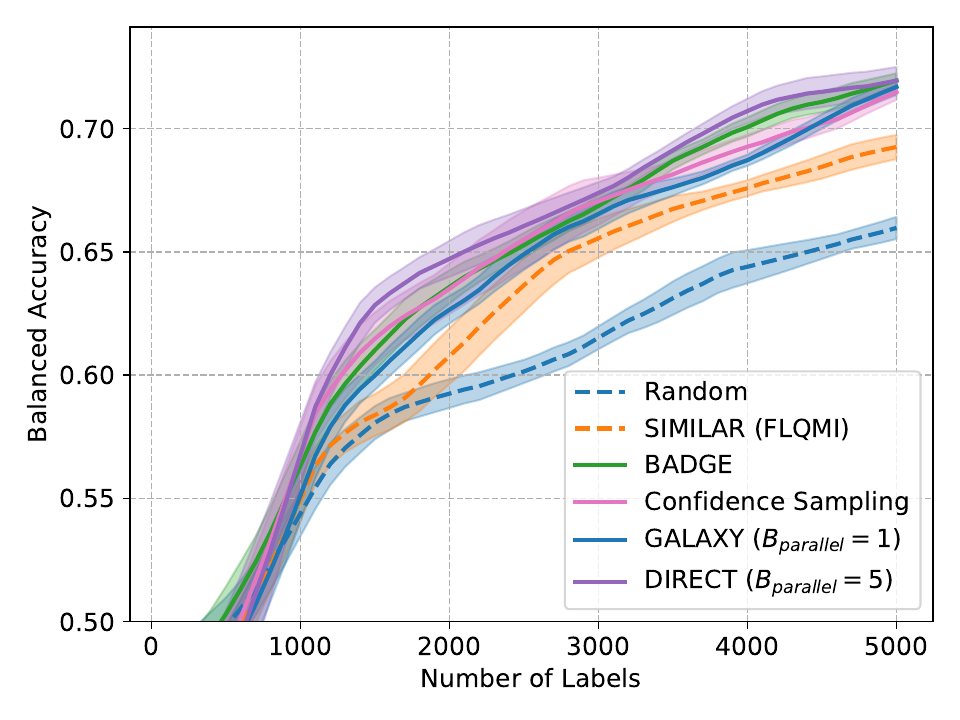}
    \end{subfigure}
    \begin{subfigure}[t]{.49\textwidth}
        \centering
        \includegraphics[width=\textwidth]{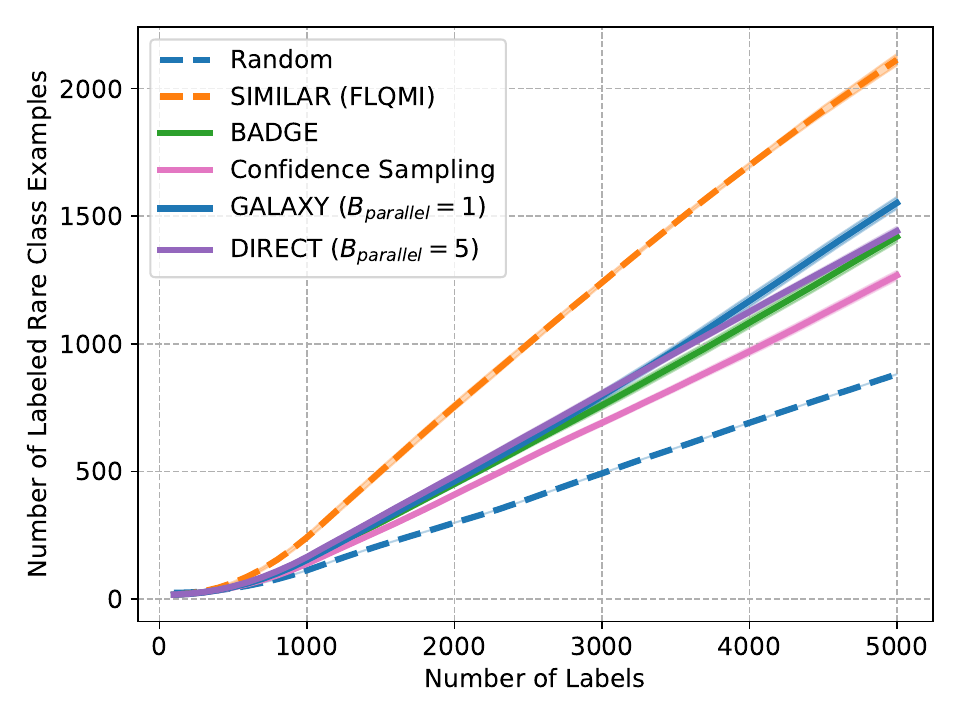}
    \end{subfigure}
    \caption{Imbalanced CIFAR-10, three classes, 10\% label noise.}
\end{figure*}
\begin{figure*}[h!]
    \centering
    \begin{subfigure}[t]{.49\textwidth}
        \centering
        \includegraphics[width=\textwidth]{figures/cifar100_unbalanced_2_label_noise_0.1_accuracy.pdf}
    \end{subfigure}
    \begin{subfigure}[t]{.49\textwidth}
        \centering
        \includegraphics[width=\textwidth]{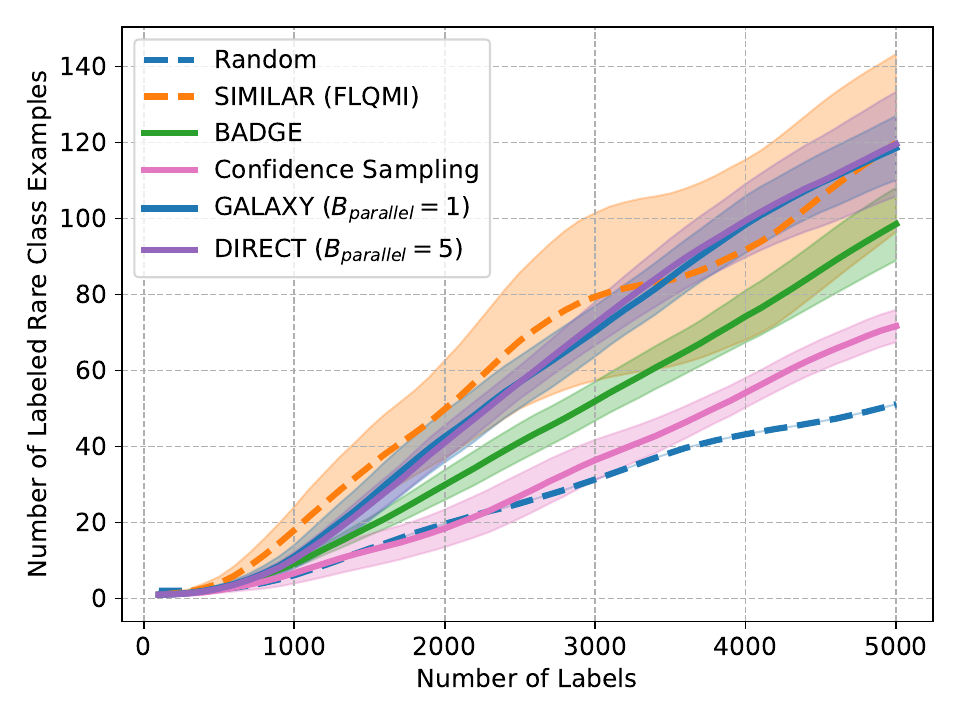}
    \end{subfigure}
    \caption{Imbalanced CIFAR-100, two classes, 10\% label noise.}
\end{figure*}
\begin{figure*}[h!]
    \centering
    \begin{subfigure}[t]{.49\textwidth}
        \centering
        \includegraphics[width=\textwidth]{figures/cifar100_unbalanced_2_label_noise_0.15_accuracy.pdf}
    \end{subfigure}
    \begin{subfigure}[t]{.49\textwidth}
        \centering
        \includegraphics[width=\textwidth]{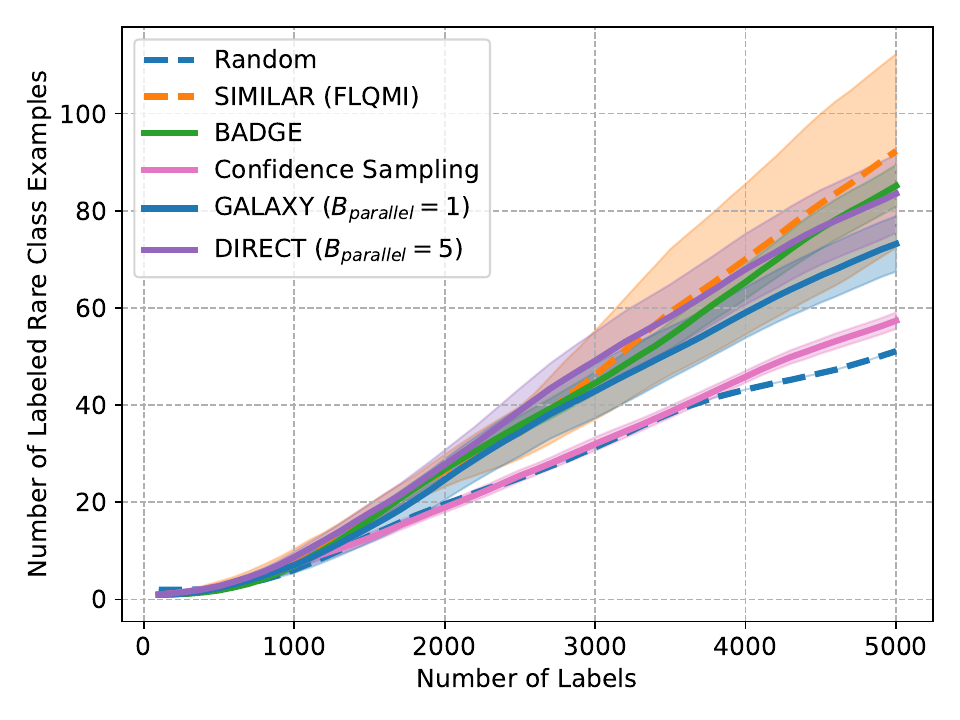}
    \end{subfigure}
    \caption{Imbalanced CIFAR-100, two classes, 15\% label noise.}
\end{figure*}
\begin{figure*}[h!]
    \centering
    \begin{subfigure}[t]{.49\textwidth}
        \centering
        \includegraphics[width=\textwidth]{figures/cifar100_unbalanced_2_label_noise_0.2_accuracy.pdf}
    \end{subfigure}
    \begin{subfigure}[t]{.49\textwidth}
        \centering
        \includegraphics[width=\textwidth]{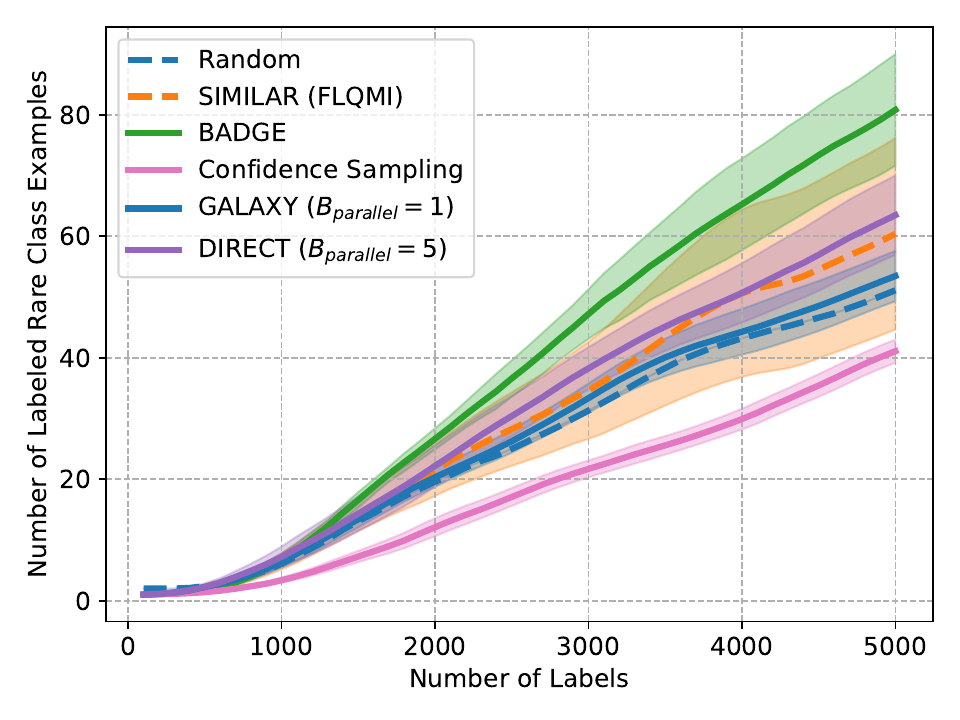}
    \end{subfigure}
    \caption{Imbalanced CIFAR-100, two classes, 20\% label noise.}
\end{figure*}
\begin{figure*}[h!]
    \centering
    \begin{subfigure}[t]{.49\textwidth}
        \centering
        \includegraphics[width=\textwidth]{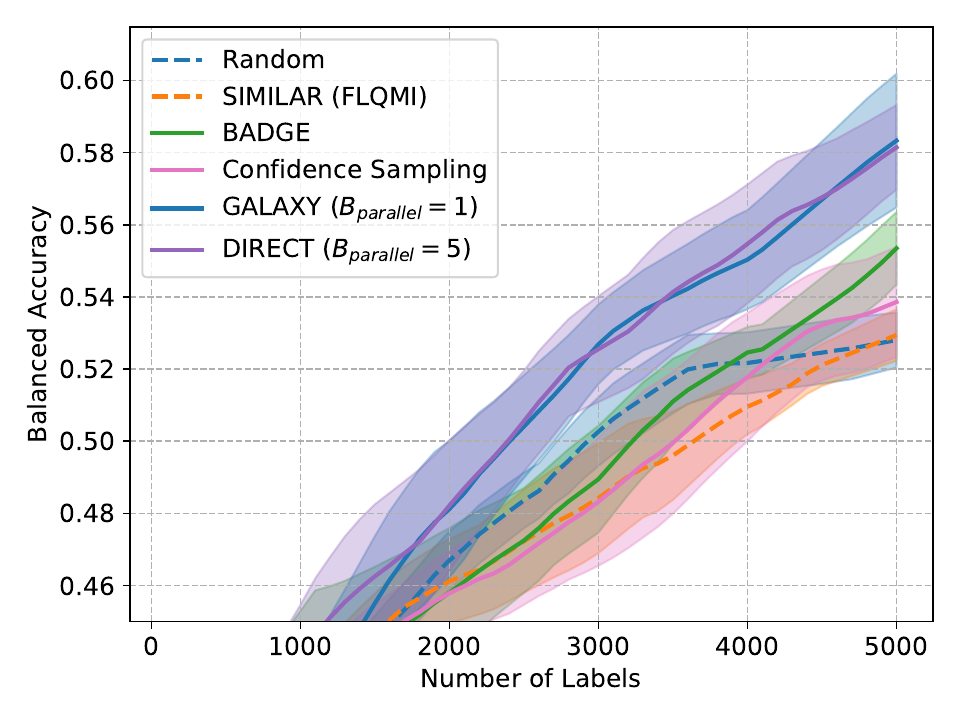}
    \end{subfigure}
    \begin{subfigure}[t]{.49\textwidth}
        \centering
        \includegraphics[width=\textwidth]{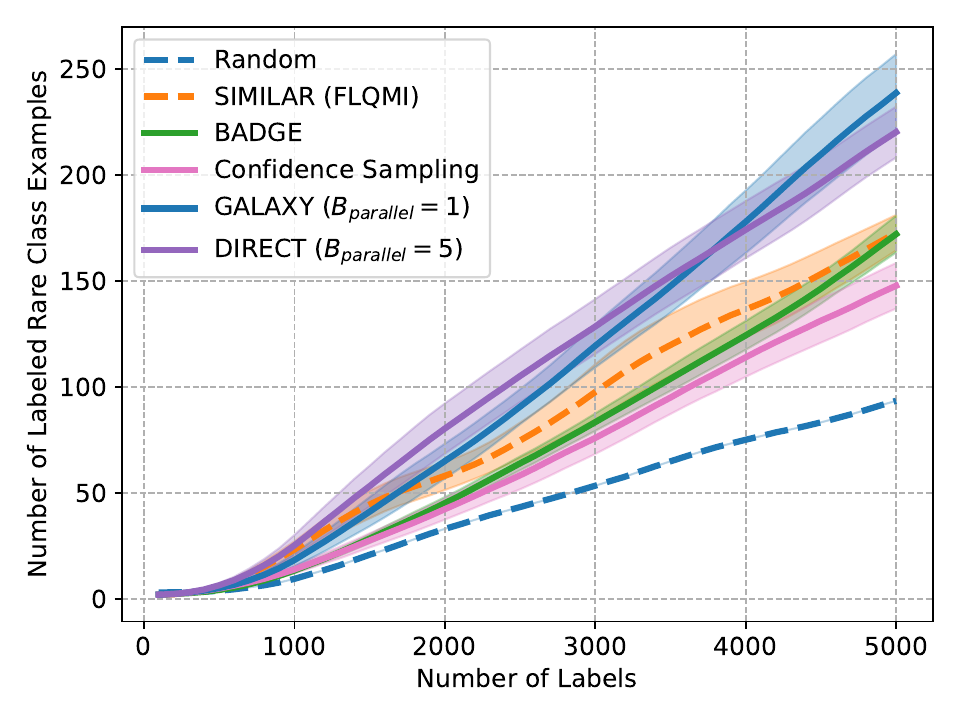}
    \end{subfigure}
    \caption{Imbalanced CIFAR-100, three classes, 10\% label noise.}
\end{figure*}
\begin{figure*}[h!]
    \centering
    \begin{subfigure}[t]{.49\textwidth}
        \centering
        \includegraphics[width=\textwidth]{figures/svhn_unbalanced_2_label_noise_accuracy.pdf}
    \end{subfigure}
    \begin{subfigure}[t]{.49\textwidth}
        \centering
        \includegraphics[width=\textwidth]{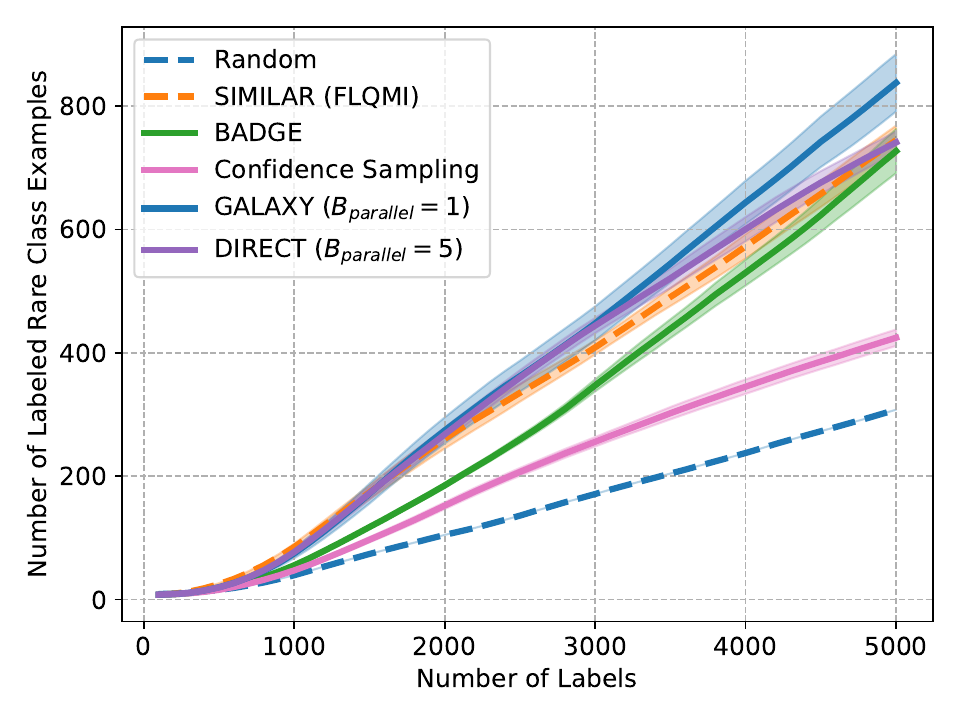}
    \end{subfigure}
    \caption{Imbalanced SVHN, two classes, 10\% label noise.}
\end{figure*}
\begin{figure*}[h!]
    \centering
    \begin{subfigure}[t]{.49\textwidth}
        \centering
        \includegraphics[width=\textwidth]{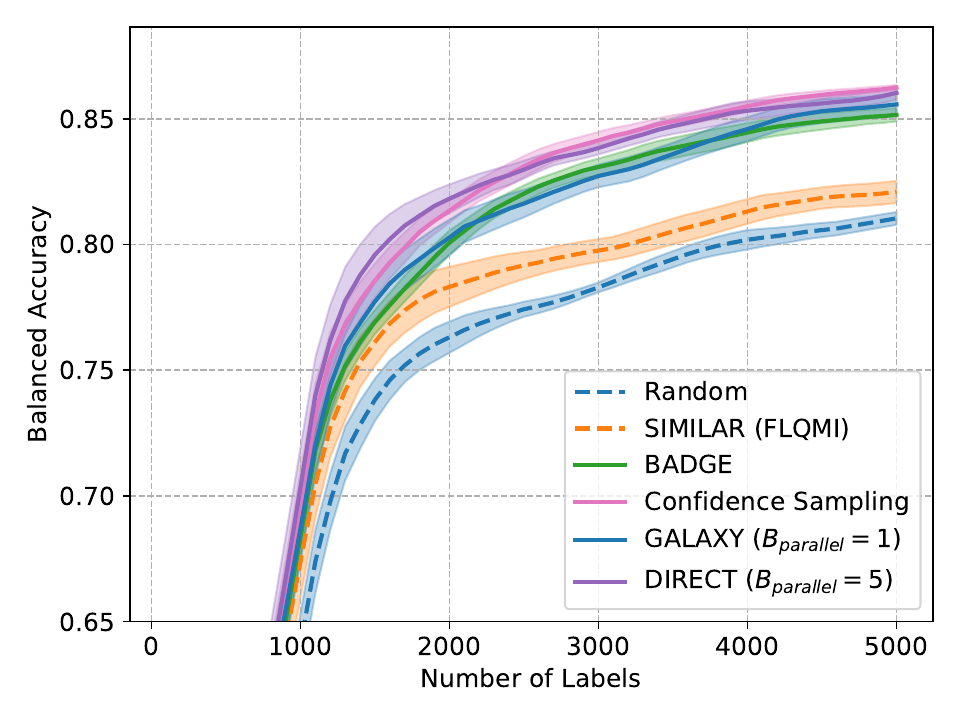}
    \end{subfigure}
    \begin{subfigure}[t]{.49\textwidth}
        \centering
        \includegraphics[width=\textwidth]{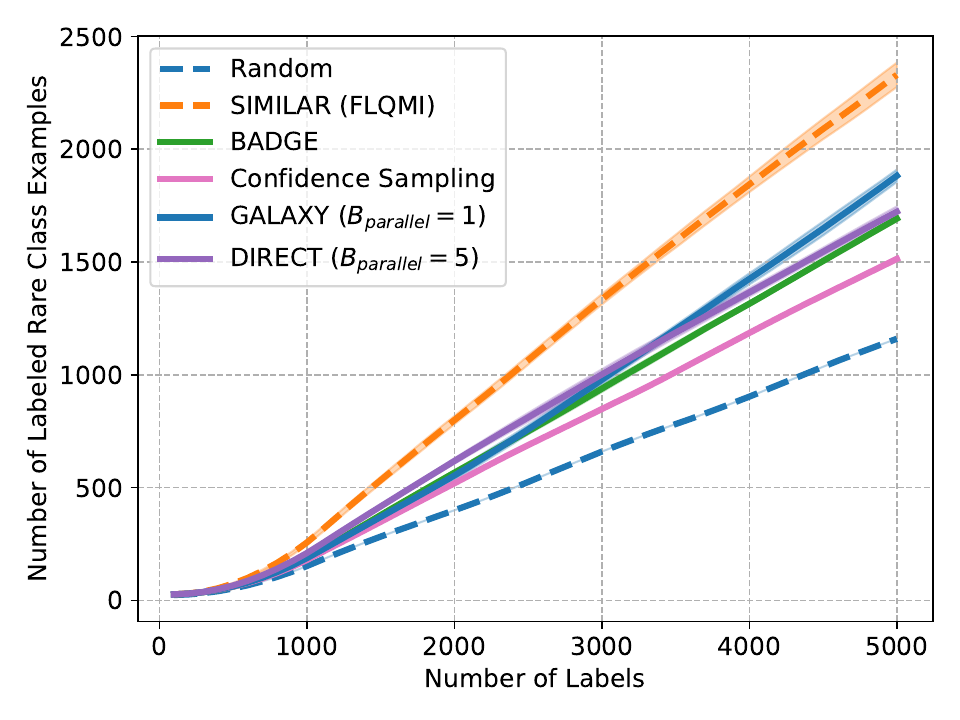}
    \end{subfigure}
    \caption{Imbalanced SVHN, three classes, 10\% label noise.}
\end{figure*}
\begin{figure*}[h!]
    \centering
    \begin{subfigure}[t]{.49\textwidth}
        \centering
        \includegraphics[width=\textwidth]{figures/cifar10lt_label_noise_accuracy.pdf}
        \caption{CIFAR-10LT, 10 classes, 15\% label noise.}
    \end{subfigure}
    \begin{subfigure}[t]{.49\textwidth}
        \centering
        \includegraphics[width=\textwidth]{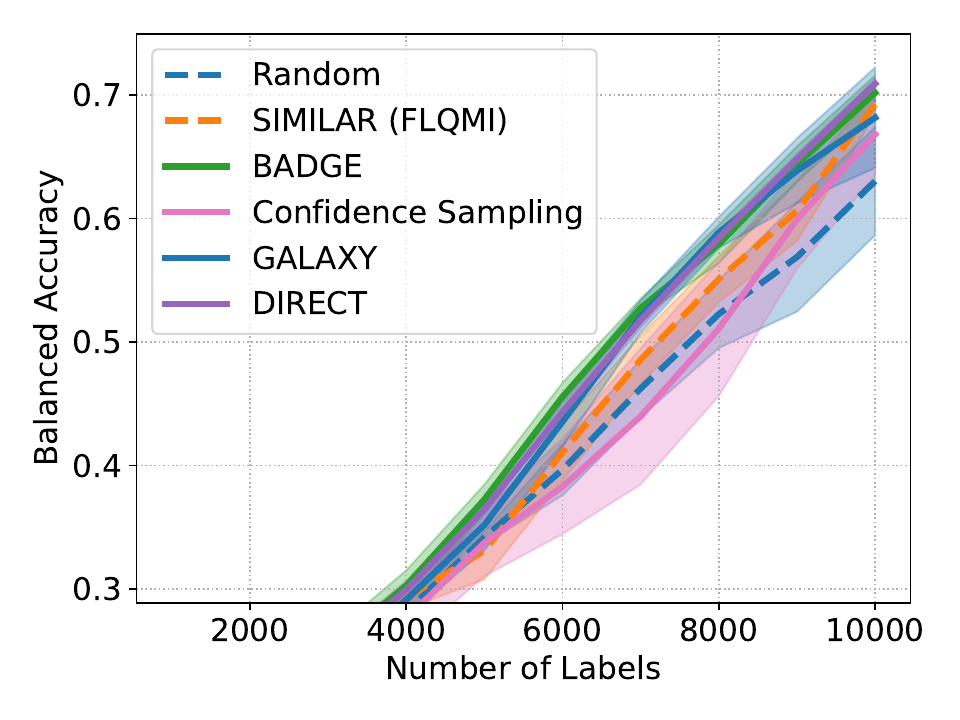}
        \caption{CIFAR-100LT, 100 classes, 15\% label noise.}
    \end{subfigure}
\end{figure*}
\begin{figure*}[h!]
    \centering
    \begin{subfigure}[t]{.49\textwidth}
        \centering
        \includegraphics[width=\textwidth]{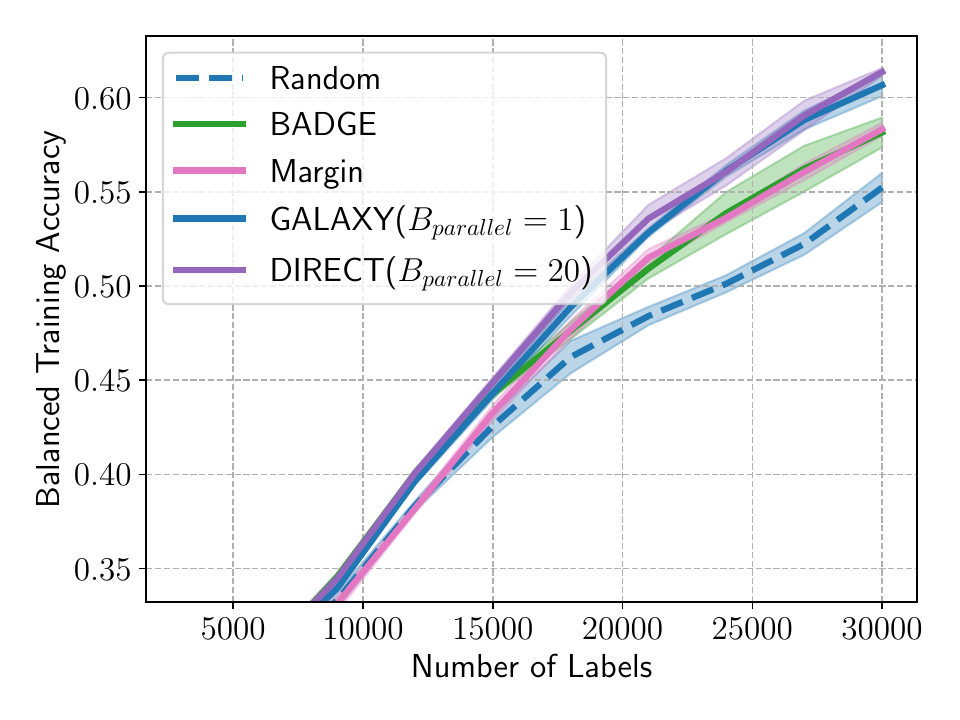}
        \caption{FMoW Balanced Pool Accuracy}
    \end{subfigure}
    \begin{subfigure}[t]{.49\textwidth}
        \centering
        \includegraphics[width=\textwidth]{figures/iwildcam_random_3000_none_clip_ViTB32_passive_Balanced_Training_Accuracy.pdf}
        \caption{iWildcam Balanced Pool Accuracy}
    \end{subfigure}

    \caption{LabelBench results in the 10\% label noise setting.}
\end{figure*}

\end{document}